\documentclass{article}

\usepackage[preprint]{neurips_2025}

\usepackage[utf8]{inputenc} 
\usepackage[T1]{fontenc}    
\usepackage{hyperref}       
\usepackage{url}            
\usepackage{booktabs}       
\usepackage{amsfonts}       
\usepackage{nicefrac}       
\usepackage{microtype}      
\usepackage{xcolor}         

\usepackage{alltt} 

\usepackage[toc,page]{appendix}
\usepackage{titletoc}

\usepackage{amsmath}
\usepackage{amsthm}
\usepackage{amssymb}
\usepackage{tikz}
\usepackage{array}
\usepackage{colortbl}

\newtheorem{proposition}{Proposition}
\newtheorem*{proposition*}{Proposition}
\usepackage{wrapfig}
\usepackage{graphicx}
\usepackage{xspace}
\usepackage{multirow}
\usepackage{makecell}
\usepackage{nicematrix}
\usepackage[normalem]{ulem}
\usepackage{listings}
\usepackage{comment}
\usepackage{subfigure}
\usepackage{caption}
\usepackage{rotate}
\usepackage{float}
\usepackage{cleveref}
\usepackage{algorithmic}
\usepackage{algorithm}
\usepackage[bottom]{footmisc}
\usepackage{tcolorbox}
\usepackage{tabularx}
\usepackage{arydshln}

\NewDocumentCommand{\rot}{O{45} O{1em} m}{\makebox[#2][l]{\rotatebox{#1}{#3}}}%

\definecolor{babypink}{rgb}{0.96, 0.76, 0.76}
\definecolor{coralpink}{rgb}{0.97, 0.51, 0.47}
\definecolor{aqua}{rgb}{0.0, 1.0, 1.0}
\definecolor{columbiablue}{rgb}{0.61, 0.87, 1.0}
\definecolor{cyan}{RGB}{222, 255, 255}
\definecolor{turquoiseblue}{rgb}{0.0, 1.0, 0.94}
\definecolor{realcyan}{RGB}{0, 200, 200}

\hypersetup{
    colorlinks = true,
    linkbordercolor = {white},
    linkcolor = blue!60!black,
    citecolor = blue!60!black,
    urlcolor = blue!60!black
}

\title{Understanding LLM Evaluator Behavior:
A Structured Multi‑Evaluator Framework for Merchant Risk Assessment}

\author{
    \textbf{Liang Wang} \quad
    \textbf{Junpeng Wang} \quad
    \textbf{Chin-Chia Michael Yeh} \quad
    \textbf{Yan Zheng} \\
    \textbf{Jiarui Sun} \quad
    \textbf{Xiran Fan} \quad 
    \textbf{Xin Dai} \quad
    \textbf{Yujie Fan} \\
    \textbf{Yiwei Cai}
    \vspace{0.4em} \\
    \textnormal{Visa Research} \\ 
    \textnormal{900 Metro Center Blvd} \\
    \textnormal {Foster City, CA 94404} \\
    \texttt{(liawang,junpenwa,miyeh,yazheng,jiaruis2,xiyafan,xidai,yufan,yicai)}@visa.com
}

\begin{document}

\maketitle
\begin{abstract}
Large Language Models (LLMs) are increasingly used as evaluators of reasoning
quality, yet their reliability and bias in \emph{payments‑risk} settings remain
poorly understood. We introduce a \textbf{structured multi‑evaluator framework}
for assessing LLM reasoning in Merchant Category Code (MCC)‑based merchant risk
assessment, combining a five‑criterion domain rubric with Monte‑Carlo scoring
to evaluate both rationale quality and evaluator stability. Five frontier LLMs
(GPT‑5.1, Gemini‑2.5 Pro, Grok~4, Claude~4.5 Sonnet, Perplexity Sonar) generate
and cross‑evaluate MCC risk rationales under \emph{attributed} and
\emph{anonymized} conditions. To establish a principled, judge‑independent
reference, we introduce a \textbf{consensus‑deviation metric} that eliminates
circularity by comparing each judge’s score to the mean of all \emph{other}
judges, yielding a theoretically grounded measure of self‑evaluation and cross‑model
deviation. Our results reveal substantial heterogeneity in evaluator behavior:  GPT‑5.1 and Claude~4.5 Sonnet show negative self‑evaluation bias ($-0.33$, $-0.31$), while Gemini‑2.5 Pro and Grok~4 display strong positive bias ($+0.77$, $+0.71$), with
bias direction persisting but attenuating by 25.8\% under anonymization.
Evaluation by 26 payment‑industry experts shows that LLM judges assign scores
averaging +0.46 points higher than human consensus, and that the negative bias
of GPT‑5.1 and Claude~4.5 Sonnet relative to LLM peers reflects \emph{closer
alignment with human judgment}. Ground‑truth validation using payment‑network
transaction data shows that four models—Claude~4.5 Sonnet, Gemini‑2.5 Pro,
Grok~4, and GPT‑5.1—exhibit statistically significant alignment (Spearman
$\rho = 0.56$–$0.77$), confirming that the evaluation framework captures genuine quality. Overall, the framework offers a replicable basis for evaluating LLM‑as‑a‑judge systems in payment‑risk workflows and highlights the need for rigorous, bias‑aware protocols when deploying LLM evaluators in operationally sensitive
financial settings.
\end{abstract}

\section{Introduction}
\label{sec:introduction}

Large Language Models (LLMs) are increasingly deployed not only as content generators but also as evaluators, judging the quality of text, code, and reasoning produced by other models ~\cite{chang2024survey,chen2025llm,kahng2024llm,li2025generation,li2024llms,tan2024large,zhang2024llmjudgesurvey,zheng2023judging}. This LLM-as-a-judge paradigm raises foundational questions about reliability, stability, and bias—questions that are particularly salient in \emph{payments-risk} domains, where evaluation errors can affect fraud analytics, merchant onboarding, regulatory compliance, and transaction
monitoring. Merchant Category Code (MCC) risk assessment provides a demanding testbed: with more than 800 heterogeneous categories, coherent rationale generation requires integrating business-model stability, regulatory exposure, fraud typologies, return and refund behavior, and chargeback dynamics.

Despite growing interest in LLM-as-a-judge systems, no prior work provides a structured, domain-aligned evaluation of LLM reasoning in payments-risk settings. Existing benchmarks emphasize general linguistic competence or broad reasoning skills, but do not examine whether models can produce correct,
complete, and operationally grounded MCC risk rationales aligned with industry practice. Moreover, prior studies rarely assess the stability of LLM-generated evaluations across repeated stochastic sampling, leaving open how consistent a given evaluator is across runs. A further limitation is the treatment of self-evaluation bias: most existing work assumes positive self-preference and uses pairwise tests that cannot measure bias magnitude, detect negative self-critique, or disentangle judge-specific bias from true quality differences~\cite{balog2025rankers,dietz2025llm,jiang2025artificial,panickssery2024llm, wang2024large,wataoka2024self,xu2402pride}. These gaps underscore the need for a principled methodology that evaluates both the \emph{quality} of LLM reasoning and the \emph{behavior} of LLM evaluators in high-stakes, domain-specific contexts.

To address these challenges, we introduce a \textbf{structured multi-evaluator framework} for MCC-based merchant-risk reasoning. The framework integrates a five-criterion rubric (Accuracy, Rationale Quality, Consistency, Completeness,
Practical Applicability) with a Monte Carlo evaluation procedure in which each model performs multiple independently sampled scoring runs, yielding estimates of evaluator stability ($\mu \pm \sigma$). We evaluate five frontier LLMs—GPT-5.1,
Gemini-2.5 Pro, Grok~4, Claude~4.5 Sonnet, and Perplexity Sonar—both as rationale generators and as evaluators. This design enables analysis not only of reasoning quality but also of how consistently different judges evaluate complex merchant categories under stochastic variability.

A central contribution of this work is a \textbf{consensus-deviation metric} that provides a principled method for quantifying both self-evaluation and cross-model bias. The metric eliminates circularity by comparing each judge's
score only to the mean assigned by all \emph{other} judges, ensuring that the reference standard remains independent of the judge being evaluated. This makes it possible to isolate judge-specific tendencies, measure bias magnitude robustly, and detect both positive (self-promoting) and negative
(self-critical) forms of self-evaluation. Using this metric, we conduct cross-evaluation experiments under two conditions—\emph{attributed} (source model disclosed) and \emph{anonymized} (source concealed)—allowing us to
distinguish biases driven by authorship recognition from those reflecting deeper evaluative tendencies.

Our results show substantial heterogeneity in evaluator behavior. GPT-5.1 and Claude~4.5 Sonnet exhibit \emph{negative} self-evaluation bias ($-0.33$ and $-0.31$
points), consistently scoring their own outputs below peer consensus—a behavior not identified in prior LLM self-evaluation work. In contrast, Gemini-2.5 Pro and Grok~4 display strong \emph{positive} bias ($+0.77$ and $+0.71$), while Perplexity Sonar exhibits modest positive bias ($+0.21$). Bias direction persists under anonymization: although anonymization reduces magnitude by 25.8\% on average, it does not reverse direction, indicating that evaluator tendencies reflect underlying
model characteristics rather than explicit authorship cues. These findings demonstrate that evaluation behavior varies systematically across models and can differ sharply from traditional expectations of universal self-preference.

To validate these LLM-based findings, we conducted complementary human expert evaluation: 26 domain experts from the payments industry—including research scientists and experienced business partners with deep expertise in merchant risk assessment, fraud prevention, and payment operations—independently evaluated the same LLM-generated rationales using the identical evaluation rubric. This human validation reveals that LLM judges systematically assign scores averaging +0.46 points higher than human expert consensus, with models exhibiting negative bias relative to LLM peers (GPT-5.1, Claude-4.5 Sonnet) demonstrating closest alignment with human judgment. We further validate findings against four years of payment network transaction data, showing that top-rated models achieve Spearman $\rho = 0.56$--0.77 with empirical merchant risk patterns. This triangulated validation—combining peer consensus, human expert assessment, and empirical ground truth—confirms that the evaluation framework captures genuine quality differences rather than shared model artifacts.

\paragraph{Contributions.}
\begin{enumerate}
    \item We provide the first structured, domain-aligned evaluation of LLM reasoning in MCC-based \emph{payments-risk} settings, establishing a transparent and replicable foundation for assessing LLM-as-a-judge systems.
    \item We develop a Monte Carlo evaluation framework that quantifies scoring stability through repeated independent assessments, revealing substantial cross-model differences in evaluator consistency.
    \item We introduce a consensus-deviation metric with mathematically proven
    circularity prevention, enabling rigorous measurement of both positive and negative self-evaluation bias.
    \item We present the first quantitative evidence of negative
    self-evaluation bias in frontier LLMs, extending and challenging prior work on self-preference.
    \item We show that bias direction persists under anonymization and that anonymization reduces magnitude without eliminating directional tendencies.
    \item We validate findings through complementary human expert evaluation and empirical ground-truth analysis, demonstrating that models exhibiting conservative scoring relative to LLM peers align more closely with human judgment and, for top performers, with payment network transaction data.
\end{enumerate}

\paragraph{Paper organization.}
Section~\ref{sec:mcc_generation} describes MCC risk rationale generation.
Section~\ref{sec:evaluation} presents the Monte Carlo evaluation framework including human expert validation.
Section~\ref{sec:bias_metrics} introduces the consensus-deviation metric, characterizes bias patterns, and compares against human baseline.
Section~\ref{sec:empirical_validation} validates findings against payment network transaction data.
Section~\ref{sec:related_work} reviews related literature.
Section~\ref{sec:discussions} discusses implications and limitations.
Section~\ref{sec:conclusion} concludes with implications for trustworthy LLM-as-a-judge deployment.

\section{MCC Risk Rationale Generation}
\label{sec:mcc_generation}

Figure~\ref{fig:prompt_structure} illustrates the prompt design used to elicit
structured MCC risk rationales that serve as evaluation targets in our
multi-evaluator framework. This section describes the prompt structure, the
five core risk dimensions, and the LLM-generated outputs analyzed in later
sections.


\begin{figure}[t]
\centering
\resizebox{\columnwidth}{!}{%
\begin{tikzpicture}[
    every node/.style={font=\sffamily},
    header/.style={fill=blue!10, draw=blue!60, line width=1pt, rounded corners=3pt, minimum height=1.0cm},
    mainbox/.style={draw=gray!50, line width=1.5pt, rounded corners=5pt},
    badge/.style={circle, minimum size=0.55cm, line width=1.5pt, draw=white},
    sectiontitle/.style={font=\sffamily\bfseries\normalsize, color=blue!80},
    bodytext/.style={font=\sffamily\small},
    jsonbox/.style={fill=gray!5, draw=gray!40, line width=1pt, rounded corners=3pt}
]

\draw[mainbox] (0,2.1) rectangle (18,8.5);

\draw[gray!50, line width=1.5pt] (6,2.3) -- (6,8.2);
\draw[gray!50, line width=1.5pt] (12,2.3) -- (12,8.2);

\node[header, minimum width=5cm] at (3,7.9) {};
\node[font=\sffamily\bfseries\normalsize, color=blue!80] at (3,8.15) {INPUT};
\node[font=\sffamily\small] at (3,7.7) {Risk Levels};

\node[badge, fill=green!60] at (0.8,6.5) {\small\textbf{\textcolor{white}{1}}};
\node[bodytext, anchor=west] at (1.5,6.5) {Very Low Risk};

\node[badge, fill=lime!70] at (0.8,5.6) {\small\textbf{\textcolor{white}{2}}};
\node[bodytext, anchor=west] at (1.5,5.6) {Low Risk};

\node[badge, fill=yellow!70!orange] at (0.8,4.7) {\small\textbf{\textcolor{white}{3}}};
\node[bodytext, anchor=west] at (1.5,4.7) {Medium Risk};

\node[badge, fill=orange!80] at (0.8,3.8) {\small\textbf{\textcolor{white}{4}}};
\node[bodytext, anchor=west] at (1.5,3.8) {High Risk};

\node[badge, fill=red!70] at (0.8,2.9) {\small\textbf{\textcolor{white}{5}}};
\node[bodytext, anchor=west] at (1.5,2.9) {Very High Risk};

\node[header, minimum width=5cm] at (9,7.9) {};
\node[font=\sffamily\bfseries\normalsize, color=blue!80] at (9,8.15) {RATIONALE};
\node[font=\sffamily\small] at (9,7.7) {Instructions};

\node[sectiontitle, anchor=west] at (6.5,6.8) {Explicitly Address:};
\node[bodytext, anchor=west] at (6.8,6.35) {$\bullet$ Business Model Stability};
\node[bodytext, anchor=west] at (6.8,5.95) {$\bullet$ Regulatory Exposure};
\node[bodytext, anchor=west] at (6.8,5.55) {$\bullet$ Fraud Exposure};
\node[bodytext, anchor=west] at (6.8,5.15) {$\bullet$ Return Patterns};
\node[bodytext, anchor=west] at (6.8,4.75) {$\bullet$ Chargeback Activity};

\node[sectiontitle, anchor=west] at (6.5,4.2) {Include:};
\node[bodytext, anchor=west, color=green!60!black] at (6.8,3.8) {$\checkmark$ 3 Representative MCCs};

\node[sectiontitle, anchor=west] at (6.5,3.15) {Constraint:};
\node[bodytext, anchor=west, color=red!70!black] at (6.8,2.75) {$\times$ No Numerical Metrics};
\node[bodytext, anchor=west, color=red!70!black] at (6.8,2.35) {$\times$ No Industry Jargon};

\node[header, minimum width=5cm] at (15,7.9) {};
\node[font=\sffamily\bfseries\normalsize, color=blue!80] at (15,8.15) {OUTPUT};
\node[font=\sffamily\small] at (15,7.7) {Format};

\node[sectiontitle, anchor=west] at (12.5,6.8) {JSON Array};
\node[font=\sffamily\small, anchor=west] at (12.5,6.3) {with objects containing:};

\node[jsonbox, minimum width=4.5cm, minimum height=3.3cm, anchor=north west] at (12.5,5.9) {};
\node[font=\ttfamily\footnotesize, anchor=north west, align=left, inner sep=0pt, node distance=0pt] at (12.7,5.75) {
    [\\[0.05cm]
    \hspace{0.3cm}\{\\[0.05cm]
    \hspace{0.6cm}"risk\_level\_\\[0.05cm]
    \hspace{0.7cm}definition": ...\\[0.1cm]
    \hspace{0.6cm}"rationale": ...\\[0.05cm]
    \hspace{0.3cm}\},\\[0.05cm]
    \hspace{0.3cm}...\\[0.05cm]
    ]
};

\end{tikzpicture}%
}
\caption{\textbf{Structure of the MCC Risk Rationale Prompt.} \textbf{Left}: INPUT specifies five risk levels from very low to very high risk. \textbf{Center:} RATIONALE instructions require explicit coverage of five payments-risk dimensions (Business Model Stability, Regulatory Exposure, Fraud Exposure, Return Patterns, Chargeback Activity) along with 3 representative MCCs, while prohibiting numerical metrics and industry jargon.\textbf{Right:} OUTPUT format specifies a structured JSON array containing risk level definitions and rationales. Full prompt text appears in Appendix~\ref{app:prompt}.}
\label{fig:prompt_structure}
\end{figure}
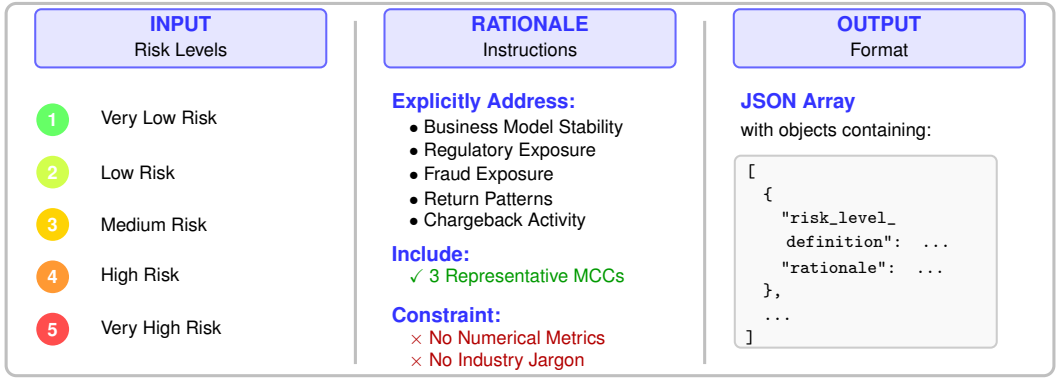

\subsection{Prompt Design and Risk Level Specification}

Five frontier LLMs—GPT‑5.1, Gemini‑2.5 Pro, Grok~4, Claude‑4.5 Sonnet, and
Perplexity Sonar—were each instructed to act as global payment-risk experts and
produce qualitative assessments across a five-level spectrum (Very Low to Very
High Risk). To ensure comparability across models, the prompt requires each
rationale to address the same five domain-relevant dimensions:

\begin{itemize}
    \item Business Model Stability
    \item Regulatory Exposure
    \item Fraud Exposure
    \item Return / Refund Patterns
    \item Chargeback Activity
\end{itemize}

For each risk level, models must also select three representative MCCs from the over 800 distinct merchant categories. To promote domain-aligned, interpretable reasoning, the prompt restricts numerical claims and discourages specialized industry jargon, and the output must follow a structured JSON schema enabling machine-readable analysis.

Figure~\ref{fig:selected_mccs} provides examples of representative MCCs spanning diverse business models and risk profiles. These categories anchor the
rationales in concrete merchant types that reflect realistic payments-risk
considerations.



\begin{figure}[t]
\centering
\small
\begin{tabular}{>{\centering\arraybackslash}p{2.5cm}>{\footnotesize}p{8cm}}
\toprule
\textbf{MCC Code} & \textbf{Merchant Category Description} \\
\midrule
\rowcolor{blue!5}
5411 & GROCERY STORES/SUPERMARKETS \\
\rowcolor{white}
5541 & SERVICE STATIONS \\
\rowcolor{blue!5}
5552 & ELECTRIC VEHICLE CHARGING \\
\rowcolor{white}
5651 & FAMILY CLOTHING STORES \\
\rowcolor{blue!5}
5732 & ELECTRONICS STORES \\
\rowcolor{white}
5812 & RESTAURANTS \\
\rowcolor{blue!5}
5816 & DIGITAL GOODS: GAMES \\
\rowcolor{white}
5967 & INBOUND TELEMARKETING \\
\rowcolor{blue!5}
6051 & QUASI-CASH \\
\rowcolor{white}
7273 & DATING \& ESCORT SERVICES \\
\bottomrule
\end{tabular}
\caption{\textbf{Representative MCCs}. Ten MCCs spanning diverse risk levels and business models: from low-risk essential services (grocery stores, service stations) to high-risk categories (quasi-cash, telemarketing, dating services). Full descriptions appear in the Visa Merchant Data Standards Manual~\cite{visa2023merchant} and Mastercard Quick Reference Booklet~\cite{mastercard2023merchant}.}
\label{fig:selected_mccs}
\end{figure}

\subsubsection{Design Principle and Data Access}

The five‑level structure yields concise prompts and supports interpretable reasoning aligned with payments‑risk practice. Importantly, all LLMs had access only to a public MCC‑to‑Name mapping table; they received no transaction‑level, merchant‑level, or proprietary network data. As a result, all generated rationales reflect general knowledge learned by these LLMs during pretraining rather than relying on any proprietary payment‑transaction information.

\subsection{LLM‑Generated Risk Rationales}

Each model produces structured JSON outputs containing five rationales—one per
risk level—covering all required dimensions and corresponding representative
MCCs. These outputs constitute the core artifacts evaluated for reasoning
quality, cross-model agreement, bias, and stability in Section~\ref{sec:evaluation}.

Figure~\ref{fig:llm_rationales} shows example rationales generated by
Claude‑4.5 Sonnet. Each card reflects the five required dimensions, a coherent
shift in language intensity across risk levels, and representative MCCs aligned with industry expectations (e.g., essential services at lower risk levels and high-volatility or high-chargeback sectors at higher levels), despite models’ access only to category names.


\begin{figure*}[t]
\centering
\resizebox{\textwidth}{!}{%
\begin{tikzpicture}[
    every node/.style={font=\small},
    card/.style={draw, line width=1.5pt, rounded corners=5pt, minimum width=5.5cm, minimum height=6.5cm},
    badge/.style={rounded corners=3pt, minimum width=5cm, minimum height=0.6cm},
    mccheader/.style={font=\scriptsize\bfseries}
]

\node[card, fill=green!10, draw=green!60] (card1) at (0,6.5) {};
\node[badge, fill=green!70!black, anchor=north] at (card1.north) [yshift=-0.2cm] {
    \textbf{\textcolor{white}{1 — Very Low Risk}}
};
\node[anchor=north west, text width=5cm, align=justify, font=\footnotesize] at (card1.north west) [xshift=0.25cm, yshift=-1.0cm] {
    Merchants operate highly stable, predictable business models with minimal seasonal variation and long operational histories. Low regulatory complexity with straightforward compliance. Fraud exposure is minimal due to transparent transaction patterns. Returns are rare. Chargeback activity is virtually non-existent.
};
\node[mccheader, color=green!70!black, anchor=south west] at (card1.south west) [xshift=0.25cm, yshift=1.2cm] {Representative MCCs};
\draw[green!60, line width=0.5pt] ([xshift=0.25cm, yshift=1.13cm]card1.south west) -- ([xshift=-0.25cm, yshift=1.13cm]card1.south east);
\node[anchor=south west, text width=5cm, align=left, font=\scriptsize\bfseries] at (card1.south west) [xshift=0.35cm, yshift=0.2cm] {
    • 5411 — Grocery Stores\\
    • 5912 — Drug Stores \& Pharmacies\\
    • 5541 — Service Stations
};

\node[card, fill=lime!5, draw=lime!70] (card2) at (6,6.5) {};
\node[badge, fill=lime!70!black, anchor=north] at (card2.north) [yshift=-0.2cm] {
    \textbf{\textcolor{white}{2 — Low Risk}}
};
\node[anchor=north west, text width=5cm, align=justify, font=\footnotesize] at (card2.north west) [xshift=0.25cm, yshift=-1.0cm] {
    Merchants exhibit stable business models with moderate predictability and occasional seasonal fluctuations that are well-understood and manageable. Regulatory exposure is routine with standard licensing. Fraud exposure is occasional and low-value. Return patterns are predictable and align with industry norms. Chargeback activity is low and manageable.
};
\node[mccheader, color=lime!70!black, anchor=south west] at (card2.south west) [xshift=0.25cm, yshift=1.2cm] {Representative MCCs};
\draw[lime!70, line width=0.5pt] ([xshift=0.25cm, yshift=1.13cm]card2.south west) -- ([xshift=-0.25cm, yshift=1.13cm]card2.south east);
\node[anchor=south west, text width=5cm, align=left, font=\scriptsize\bfseries] at (card2.south west) [xshift=0.35cm, yshift=0.2cm] {
    • 5812 — Restaurants\\
    • 5661 — Shoe Stores\\
    • 5943 — Stationery Stores
};

\node[card, fill=yellow!15, draw=orange!60] (card3) at (12,6.5) {};
\node[badge, fill=orange!75!black, anchor=north] at (card3.north) [yshift=-0.2cm] {
    \textbf{\textcolor{white}{3 — Medium Risk}}
};
\node[anchor=north west, text width=5cm, align=justify, font=\footnotesize] at (card3.north west) [xshift=0.25cm, yshift=-1.0cm] {
    Merchants operate in moderately stable environments with noticeable seasonal peaks or cyclical demand patterns. Regulatory exposure involves multiple compliance layers and periodic audits. Fraud exposure is moderate with recurring attempts. Return patterns are significant and often seasonal or trend-driven. Chargeback activity requires dedicated resources to manage.
};
\node[mccheader, color=orange!75!black, anchor=south west] at (card3.south west) [xshift=0.25cm, yshift=1.2cm] {Representative MCCs};
\draw[orange!60, line width=0.5pt] ([xshift=0.25cm, yshift=1.13cm]card3.south west) -- ([xshift=-0.25cm, yshift=1.13cm]card3.south east);
\node[anchor=south west, text width=5cm, align=left, font=\scriptsize\bfseries] at (card3.south west) [xshift=0.35cm, yshift=0.2cm] {
    • 5651 — Family Clothing Stores\\
    • 5732 — Electronics Stores\\
    • 5945 — Hobby, Toy \& Game Stores
};

\node[card, fill=orange!10, draw=orange!80] (card4) at (3,-0.35) {};
\node[badge, fill=orange!80!black, anchor=north] at (card4.north) [yshift=-0.2cm] {
    \textbf{\textcolor{white}{4 — High Risk}}
};
\node[anchor=north west, text width=5cm, align=justify, font=\footnotesize] at (card4.north west) [xshift=0.25cm, yshift=-1.0cm] {
    Merchants face elevated business model instability due to rapid market changes, high competition, or dependency on discretionary consumer spending. Regulatory exposure is substantial with complex compliance frameworks. Fraud exposure is frequent and involves sophisticated schemes. Return patterns are high-volume and unpredictable. Chargeback activity is endemic and requires intensive management.
};
\node[mccheader, color=orange!80!black, anchor=south west] at (card4.south west) [xshift=0.25cm, yshift=1.2cm] {Representative MCCs};
\draw[orange!80, line width=0.5pt] ([xshift=0.25cm, yshift=1.13cm]card4.south west) -- ([xshift=-0.25cm, yshift=1.13cm]card4.south east);
\node[anchor=south west, text width=5cm, align=left, font=\scriptsize\bfseries] at (card4.south west) [xshift=0.35cm, yshift=0.2cm] {
    • 5816 — Digital Goods: Games\\
    • 5967 — Inbound Telemarketing\\
    • 7995 — Betting/Casino/Lotto
};

\node[card, fill=red!10, draw=red!70] (card5) at (9,-0.35) {};
\node[badge, fill=red!75!black, anchor=north] at (card5.north) [yshift=-0.2cm] {
    \textbf{\textcolor{white}{5 — Very High Risk}}
};
\node[anchor=north west, text width=5cm, align=justify, font=\footnotesize] at (card5.north west) [xshift=0.25cm, yshift=-1.0cm] {
    Merchants operate in highly volatile or controversial sectors prone to sudden business collapse, reputational damage, or legal challenges. Regulatory exposure is extreme with constantly evolving legal landscapes. Fraud exposure is pervasive involving organized criminal activity and identity theft. Return patterns are chaotic. Chargeback activity is at crisis levels, threatening merchant account viability.
};
\node[mccheader, color=red!75!black, anchor=south west] at (card5.south west) [xshift=0.25cm, yshift=1.2cm] {Representative MCCs};
\draw[red!70, line width=0.5pt] ([xshift=0.25cm, yshift=1.13cm]card5.south west) -- ([xshift=-0.25cm, yshift=1.13cm]card5.south east);
\node[anchor=south west, text width=5cm, align=left, font=\scriptsize\bfseries] at (card5.south west) [xshift=0.35cm, yshift=0.2cm] {
    • 5933 — Pawn Shops\\
    • 7273 — Dating \& Escort Services\\
    • 5968 — Continuity/Subscription
};

\end{tikzpicture}%
}
\caption{\textbf{Example LLM‑Generated MCC Risk Rationales (Claude‑4.5
Sonnet).} Each rationale synthesizes all five risk dimensions and selects
representative MCCs. Color gradients reflect increasing risk severity. Complete
outputs for all models appear in Appendix~\ref{app:llm_rationales}.}
\label{fig:llm_rationales}
\end{figure*}
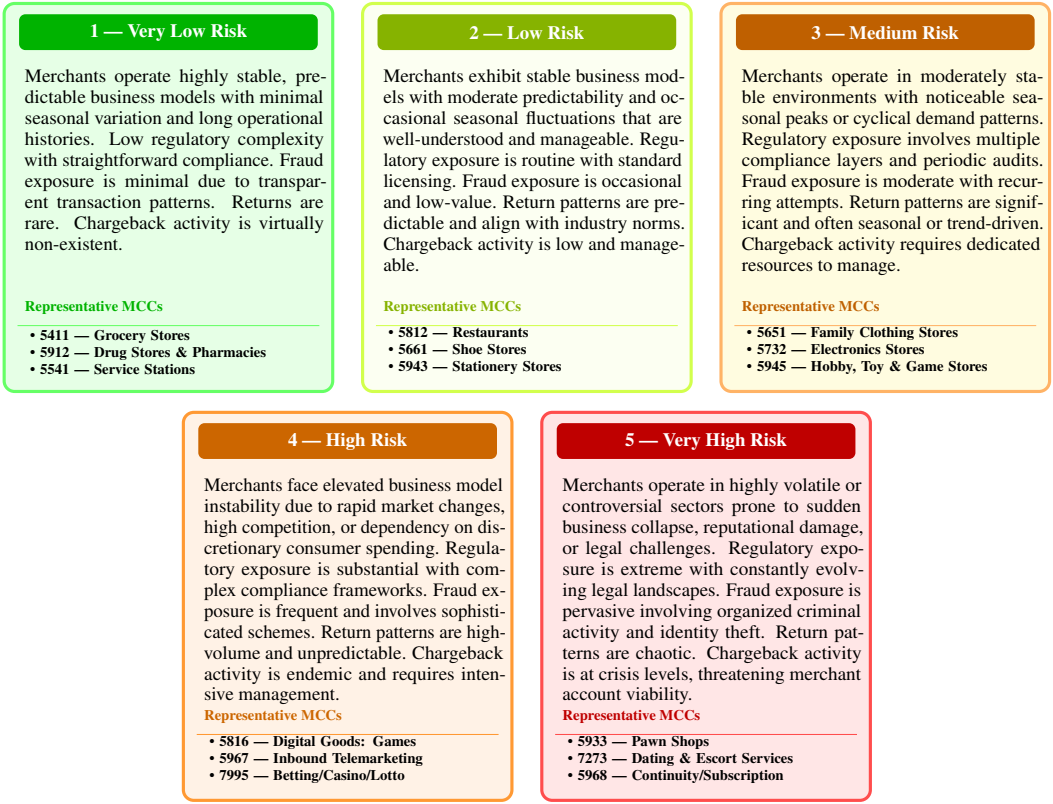

Across models, the rationales show systematic structure and domain-relevant
distinctions, providing a consistent basis for the cross-model evaluation
framework introduced in Section~\ref{sec:evaluation}. Full JSON outputs for all
five LLMs are provided in Appendix~\ref{app:llm_rationales}.

\subsection{Summary}

This section establishes a unified methodology for generating MCC-based
payment-risk rationales using frontier LLMs. The structured prompt ensures that all models address identical risk dimensions across the five-level spectrum, producing standardized outputs that serve as the evaluation targets for the multi-evaluator framework that follows.

\section{LLM-as-Judge Evaluation with Rubric and Monte Carlo Stability}
\label{sec:evaluation}

This section presents the evaluation framework used to assess the quality and
consistency of the MCC risk rationales generated in Section~\ref{sec:mcc_generation}. The same five frontier LLMs—GPT‑5.1, Gemini‑2.5 Pro, Grok~4, Claude‑4.5 Sonnet, and Perplexity Sonar—now serve as evaluators, scoring one another’s rationales under a structured rubric and a Monte Carlo protocol designed to measure evaluator stability.

\subsection{Monte Carlo Evaluation Framework}

Figure~\ref{fig:evaluation_framework} summarizes the evaluation setup. Each LLM is cast as a “Global Payments-Risk Domain Expert” and evaluates all rationales, including its own. The evaluation protocol has three components:  
(1) a role-establishing context,  
(2) a five-criterion scoring rubric, and  
(3) a Monte Carlo sampling procedure to quantify evaluator stability.

\begin{figure*}[t]
\centering
\footnotesize
\resizebox{\textwidth}{!}{%
\begin{tikzpicture}[
    every node/.style={font=\footnotesize},
    card/.style={draw=blue!60, line width=1.5pt, rounded corners=4pt, fill=blue!5},
    header/.style={fill=blue!10, draw=blue!60, rounded corners=3pt, minimum height=0.8cm, line width=1pt},
    sectiontitle/.style={font=\footnotesize\bfseries}
]


\node[card, minimum width=6.5cm, minimum height=6cm] (card1) at (-0.25,6) {};
\node[header, anchor=north, minimum width=6.2cm] at (card1.north) [yshift=-0.15cm] {
    \textbf{\textcolor{blue!80}{EVALUATION CONTEXT}}
};

\node[anchor=north west, text width=5.5cm, align=left] at (card1.north west) [xshift=0.25cm, yshift=-1.2cm] {
    \textbf{Role: Global Payments-Risk Domain Expert}\\[0.15cm]
    \textbf{Target Models (5 LLMs):}\\[0.1cm]
    • OpenAI GPT-5.1\\
    • Gemini 2.5 Pro\\
    • Grok 4\\
    • Claude 4.5 Sonnet\\
    • Perplexity Sonar
};

\node[card, minimum width=6.5cm, minimum height=6cm] (card2) at (7,6) {};
\node[header, anchor=north, minimum width=6.2cm] at (card2.north) [yshift=-0.15cm] {
    \textbf{\textcolor{blue!80}{MONTE CARLO PROTOCOL}}
};

\node[anchor=north west, text width=5.5cm, align=left] at (card2.north west) [xshift=0.25cm, yshift=-1.2cm] {
    \textbf{10 Independent Runs}\\
    Each run re-scores the same fixed\\
    rationale text\\[0.1cm]
    \textbf{Temperature = 0.7}\\
    Sampling variability quantifies\\
    evaluator stability\\[0.1cm]
    \textbf{Result: $\mu \pm \sigma$}\\
    \hspace{0.3cm}$\mu$ = mean score\\
    \hspace{0.3cm}$\sigma$ = consistency measure
};

\node[card, minimum width=6.5cm, minimum height=6cm] (card3) at (14.25,6) {};
\node[header, anchor=north, minimum width=6.2cm] at (card3.north) [yshift=-0.15cm] {
    \textbf{\textcolor{blue!80}{CRITICAL RULES}}
};

\node[anchor=north west, text width=5.5cm, align=left] at (card3.north west) [xshift=0.25cm, yshift=-1.2cm] {
    \textbf{Rule 1:}\\
    Do NOT alter LLM outputs\\
    Evaluation only, no modification\\[0.1cm]
    \textbf{Rule 2:}\\
    10 runs = 10 independent\\
    scoring passes by YOU (the evaluator)\\[0.1cm]
    No new rationales generated\\
    Only evaluator's judgment repeated\\
    10 times
};


\node[card, minimum width=8.5cm, minimum height=6cm] (card4) at (0.75,-0.5) {};
\node[header, anchor=north, minimum width=8.2cm] at (card4.north) [yshift=-0.15cm] {
    \textbf{\textcolor{blue!80}{SCORING RUBRIC (0–10)}}
};

\node[anchor=north west, text width=3.8cm, align=left] at (card4.north west) [xshift=0.25cm, yshift=-1.0cm] {
    \textbf{1. Accuracy}\\
    0-3: Incorrect\\
    4-6: Missing $\geq$2 drivers\\
    7-8: Mostly correct\\
    9-10: Fully aligned\\[0.2cm]
    \textbf{2. Rationale Quality}\\
    0-3: Unclear\\
    4-6: Lacks depth\\
    7-8: Clear \& structured\\
    9-10: Polished
};

\node[anchor=north west, text width=4.1cm, align=left] at (card4.north west) [xshift=4.0cm, yshift=-1.0cm] {
    \textbf{3. Consistency}\\
    0-3: Contradictory\\
    4-6: Some gaps\\
    7-8: Smooth\\
    9-10: Precise logic\\[0.2cm]
    \textbf{4. Completeness}\\
    Check 5 dims $\times$ 5 levels\\
    Score = (Points / 25) $\times$ 10\\[0.2cm]
    \textbf{5. Practical Applicability}\\
    0-3: Too vague\\
    4-6: Some utility\\
    7-8: Useful\\
    9-10: Strong value
};

\node[card, minimum width=5.5cm, minimum height=6cm] (card5) at (8.25,-0.5) {};
\node[header, anchor=north, minimum width=5.2cm] at (card5.north) [yshift=-0.15cm] {
    \textbf{\textcolor{blue!80}{PROCEDURE}}
};

\node[anchor=north west, text width=5cm, align=left] at (card5.north west) [xshift=0.25cm, yshift=-1.1cm] {
    \textbf{A.} Run 10 Monte Carlo samples\\[0.1cm]
    \textbf{B.} Score each criterion (0-10)\\[0.1cm]
    \textbf{C.} Provide justification\\[0.1cm]
    \textbf{D.} Compute statistics:\\
    \hspace{0.3cm}• $\mu$ (mean)\\
    \hspace{0.3cm}• $\sigma$ (std deviation)\\[0.1cm]
    \textbf{E.} Report $\mu \pm \sigma$ per criterion\\[0.1cm]
    \textbf{F.} List strengths/weaknesses
};

\node[card, minimum width=6cm, minimum height=6cm] (card6) at (14.5,-0.5) {};
\node[header, anchor=north, minimum width=5.7cm] at (card6.north) [yshift=-0.15cm] {
    \textbf{\textcolor{blue!80}{REQUIRED OUTPUT}}
};

\node[anchor=north west, text width=5.5cm, align=left] at (card6.north west) [xshift=0.25cm, yshift=-1.1cm] {
    \textbf{1. Criterion Scores}\\
    \hspace{0.3cm}Format: $\mu \pm \sigma$ + Justification\\[0.1cm]
    \textbf{2. Final Total Score}\\
    \hspace{0.3cm}Format: $\mu_{\text{total}} \pm \sigma_{\text{total}}$\\[0.1cm]
    \textbf{3. One-Paragraph Expert}\\
    \textbf{\hspace{0.3cm}Synthesis}\\[0.1cm]
    \textbf{4. Message Structure:}\\
    \hspace{0.3cm}• Msg 1-5: Each LLM\\
    \hspace{0.3cm}• Msg 6: Summary table
};

\end{tikzpicture}%
}
\caption{\textbf{Monte Carlo Evaluation Framework}. \textbf{Top}: \emph{Evaluation Context} specifies the evaluator role and target models, \emph{Monte Carlo Protocol} defines the 10-run sampling procedure at temperature 0.7 to quantify stability, and \emph{Critical Rules} prohibit output modification and clarify that runs represent repeated evaluator judgments. \textbf{Bottom:} \emph{Scoring Rubric} provides 0–10 scales for five criteria (Accuracy, Rationale Quality, Consistency, Completeness, Practical Applicability), \emph{Procedure} outlines the six-step evaluation workflow, and \emph{Required Output} specifies the structured reporting format with $\mu \pm \sigma$ scores and expert synthesis. Full prompt text appears in Appendix~\ref{app:eval_prompts}.}
\label{fig:evaluation_framework}
\end{figure*}
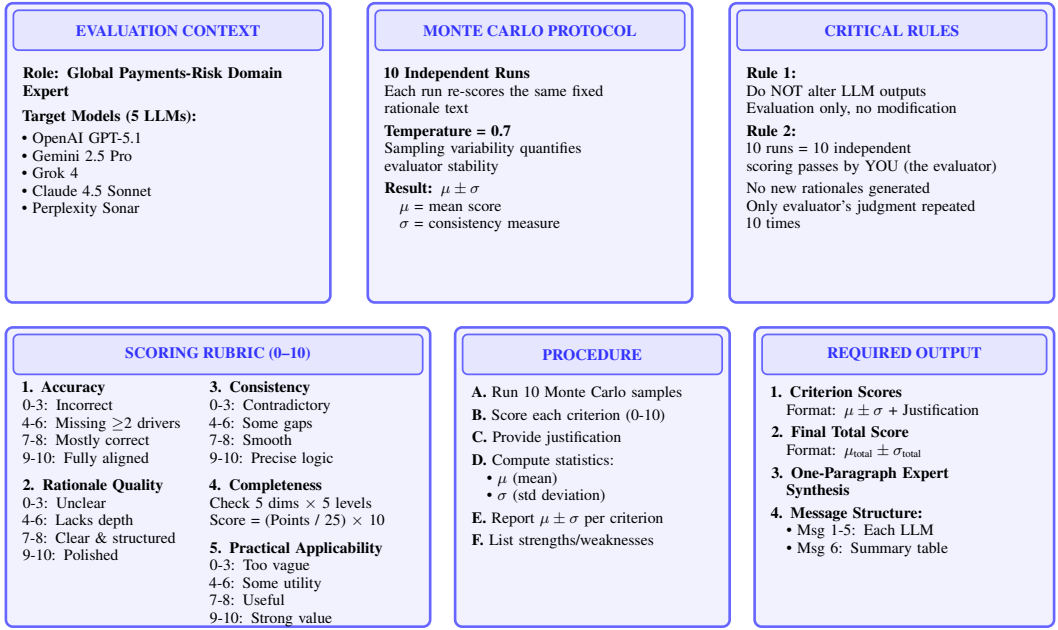

\paragraph{Monte Carlo Protocol.}
For each judge–target pair, we perform 10 independent scoring runs while holding the underlying rationale fixed. Only the evaluator’s generative reasoning varies due to stochastic sampling. The resulting mean ($\mu$) and standard deviation ($\sigma$) quantify each judge’s scoring tendency and stability. A detailed algorithm appears in Appendix~\ref{app:algorithm}.

\paragraph{Scoring Rubric.}
Evaluators score each rationale on five dimensions:
\begin{enumerate}
    \item \textbf{Accuracy}: alignment with known MCC risk behavior.  
    \item \textbf{Rationale Quality}: clarity, structure, and domain professionalism.  
    \item \textbf{Consistency}: coherent progression of risk across levels.  
    \item \textbf{Completeness}: coverage of all five required risk dimensions.  
    \item \textbf{Practical Applicability}: usefulness for payments-risk decision making.  
\end{enumerate}
The rubric captures both objective structural criteria and domain-informed qualitative reasoning.

\paragraph{Evaluation Integrity.}
Judges evaluate the original rationale without modification. Because the content is fixed across runs, $\mu$ and $\sigma$ reflect \emph{evaluator} uncertainty rather than variation in generated content.

\subsection{Attributed vs.\ Anonymized Evaluation Conditions}

To separate identity-driven effects from deeper evaluator tendencies, scores are collected under two conditions:

\begin{itemize}
    \item \textbf{Attributed:} judges see full model identities (e.g.,
    “Generated by GPT‑5.1”).  
    \item \textbf{Anonymized:} identities are replaced with ``Expert 1--5''
    using a fixed mapping (GPT‑5.1 $\rightarrow$ Expert~1, Gemini‑2.5 Pro
    $\rightarrow$ Expert~2, Grok~4 $\rightarrow$ Expert~3, Claude‑4.5 Sonnet
    $\rightarrow$ Expert~4, Perplexity Sonar $\rightarrow$ Expert~5).
\end{itemize}

This dual-condition design isolates the influence of authorship recognition. Comparing attributed and anonymized scores allows us to distinguish intrinsic tendencies (present in both conditions) from contextual ones triggered by source identity.

\subsection{Illustrative Example: GPT‑5.1 Evaluates Claude‑4.5}

Figure~\ref{fig:example_comparison} illustrates the effect of authorship information. When Claude‑4.5’s identity is disclosed, GPT‑5.1 assigns slightly higher scores and uses more affirmative evaluative language. Under anonymization, scores moderately decrease and commentary becomes more neutral. Such attribution-driven shifts motivate the need for a quantitative bias measure.


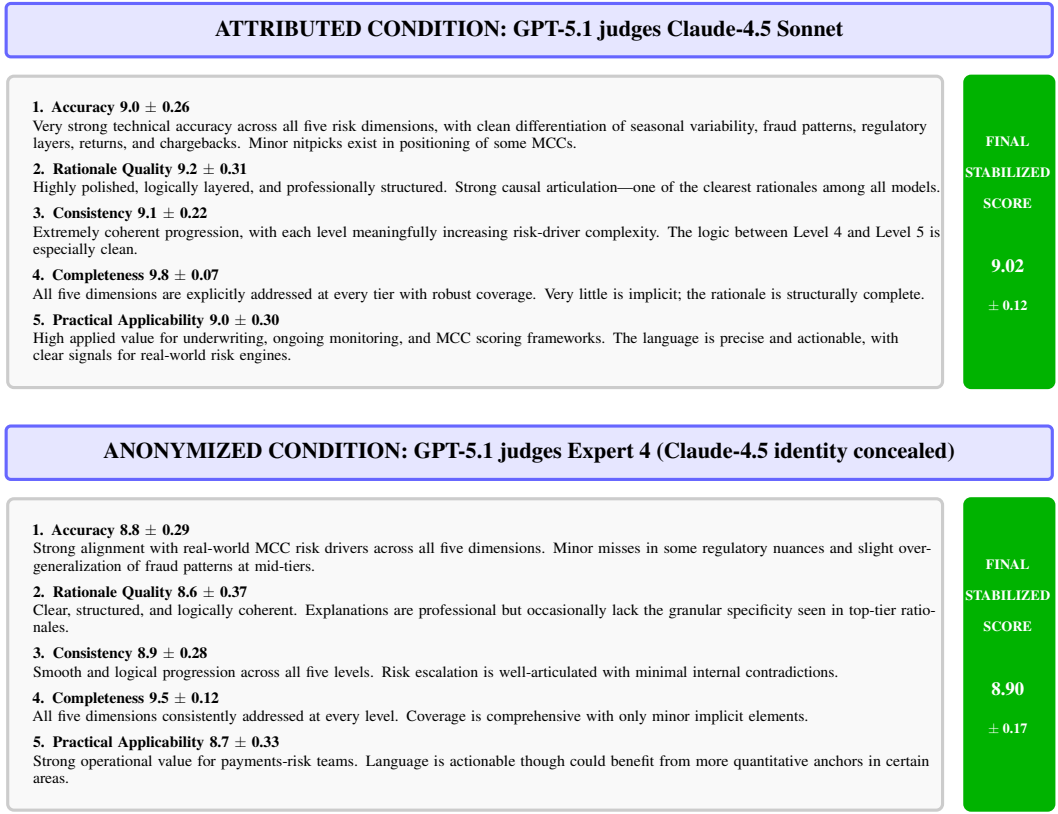
\begin{figure*}[t]
\centering
\normalsize
\resizebox{\textwidth}{!}{%
\begin{tikzpicture}[
    every node/.style={font=\normalsize},
    mainbox/.style={draw=gray!50, line width=2pt, rounded corners=5pt},
    header/.style={fill=blue!10, draw=blue!60, line width=2pt, rounded corners=3pt, minimum height=1.2cm},
    criteriabox/.style={fill=gray!5, draw=gray!40, line width=2pt, rounded corners=4pt},
    scorebox/.style={fill=green!70!black, draw=green!70!black, line width=2pt, rounded corners=4pt}
]


\node[header, minimum width=23.5cm] at (11.75,16) {
    \textbf{\Large ATTRIBUTED CONDITION: GPT-5.1 judges Claude-4.5 Sonnet}
};

\node[criteriabox, minimum width=21cm, minimum height=7cm, anchor=north west] at (0,15) {};

\node[scorebox, minimum width=2cm, minimum height=7cm, anchor=north west] at (21.5,15) {};

\node[text=white, font=\small\bfseries] at (22.5,13.5) {FINAL};
\node[text=white, font=\small\bfseries] at (22.5,12.8) {STABILIZED};
\node[text=white, font=\small\bfseries] at (22.5,12.1) {SCORE};
\node[text=white, font=\large\bfseries] at (22.5,10.7) {9.02};
\node[text=white, font=\small\bfseries] at (22.5,9.8) {$\pm$ 0.12};

\node[anchor=north west, text width=20.5cm, align=left, inner sep=0.3cm] at (0.3,14.7) {
    \textbf{1. Accuracy 9.0 $\pm$ 0.26}\\[0.03cm]
    Very strong technical accuracy across all five risk dimensions, with clean differentiation of seasonal variability, fraud patterns, regulatory layers, returns, and chargebacks. Minor nitpicks exist in positioning of some MCCs.\\[0.2cm]

    \textbf{2. Rationale Quality 9.2 $\pm$ 0.31}\\[0.03cm]
    Highly polished, logically layered, and professionally structured. Strong causal articulation—one of the clearest rationales among all models.\\[0.2cm]

    \textbf{3. Consistency 9.1 $\pm$ 0.22}\\[0.03cm]
    Extremely coherent progression, with each level meaningfully increasing risk-driver complexity. The logic between Level 4 and Level 5 is especially clean.\\[0.2cm]

    \textbf{4. Completeness 9.8 $\pm$ 0.07}\\[0.03cm]
    All five dimensions are explicitly addressed at every tier with robust coverage. Very little is implicit; the rationale is structurally complete.\\[0.2cm]

    \textbf{5. Practical Applicability 9.0 $\pm$ 0.30}\\[0.03cm]
    High applied value for underwriting, ongoing monitoring, and MCC scoring frameworks. The language is precise and actionable, with\\
    clear signals for real-world risk engines.
};


\node[header, minimum width=23.5cm] at (11.75,6.5) {
    \textbf{\Large ANONYMIZED CONDITION: GPT-5.1 judges Expert 4 (Claude-4.5 identity concealed)}
};

\node[criteriabox, minimum width=21cm, minimum height=7cm, anchor=north west] at (0,5.5) {};

\node[scorebox, minimum width=2cm, minimum height=7cm, anchor=north west] at (21.5,5.5) {};

\node[text=white, font=\small\bfseries] at (22.5,4) {FINAL};
\node[text=white, font=\small\bfseries] at (22.5,3.3) {STABILIZED};
\node[text=white, font=\small\bfseries] at (22.5,2.6) {SCORE};
\node[text=white, font=\large\bfseries] at (22.5,1.2) {8.90};
\node[text=white, font=\small\bfseries] at (22.5,0.3) {$\pm$ 0.17};

\node[anchor=north west, text width=20.5cm, align=left, inner sep=0.3cm] at (0.3,5.2) {
    \textbf{1. Accuracy 8.8 $\pm$ 0.29}\\[0.03cm]
    Strong alignment with real-world MCC risk drivers across all five dimensions. Minor misses in some regulatory nuances and slight over-generalization of fraud patterns at mid-tiers.\\[0.2cm]

    \textbf{2. Rationale Quality 8.6 $\pm$ 0.37}\\[0.03cm]
    Clear, structured, and logically coherent. Explanations are professional but occasionally lack the granular specificity seen in top-tier rationales.\\[0.2cm]

    \textbf{3. Consistency 8.9 $\pm$ 0.28}\\[0.03cm]
    Smooth and logical progression across all five levels. Risk escalation is well-articulated with minimal internal contradictions.\\[0.2cm]

    \textbf{4. Completeness 9.5 $\pm$ 0.12}\\[0.03cm]
    All five dimensions consistently addressed at every level. Coverage is comprehensive with only minor implicit elements.\\[0.2cm]

    \textbf{5. Practical Applicability 8.7 $\pm$ 0.33}\\[0.03cm]
    Strong operational value for payments-risk teams. Language is actionable though could benefit from more quantitative anchors in certain areas.
};

\end{tikzpicture}%
}
\caption{\textbf{Example: GPT-5.1 Evaluating Claude-4.5 Sonnet Under Two Conditions.} (1) \textbf{Attributed Condition} where the source model (Claude-4.5 Sonnet) identity is disclosed, yielding a final stabilized score of $9.02 \pm 0.12$, and (2) \textbf{Anonymized Condition} where the same output is presented as ``Expert 4'' with identity concealed, yielding $8.90 \pm 0.17$. Each criterion shows the mean score $\mu \pm \sigma$ from 10 independent Monte Carlo runs at temperature 0.7, accompanied by the evaluator's justification.}
\label{fig:example_comparison}
\end{figure*}

\subsection{Cross-Evaluation Score Matrices}
\label{subsec:cross_evaluation_score_matrices}
The full Monte Carlo evaluation yields two $5\times5$ matrices—one for
attributed scoring and one for anonymized scoring—shown in
Figure~\ref{fig:cross_evaluation}. Each cell reports a judge’s stabilized mean score ($\mu$) and consistency estimate ($\sigma$). Color shading encodes each judge’s \emph{relative} ordering of the five targets, enabling comparison of ranking behavior independent of absolute calibration differences. Detailed criterion-level numerical values corresponding to these matrices are provided in Appendix~\ref{app:detailed_tables}.

\begin{figure}[htpb]
\centering
\includegraphics[width=1.0\textwidth]{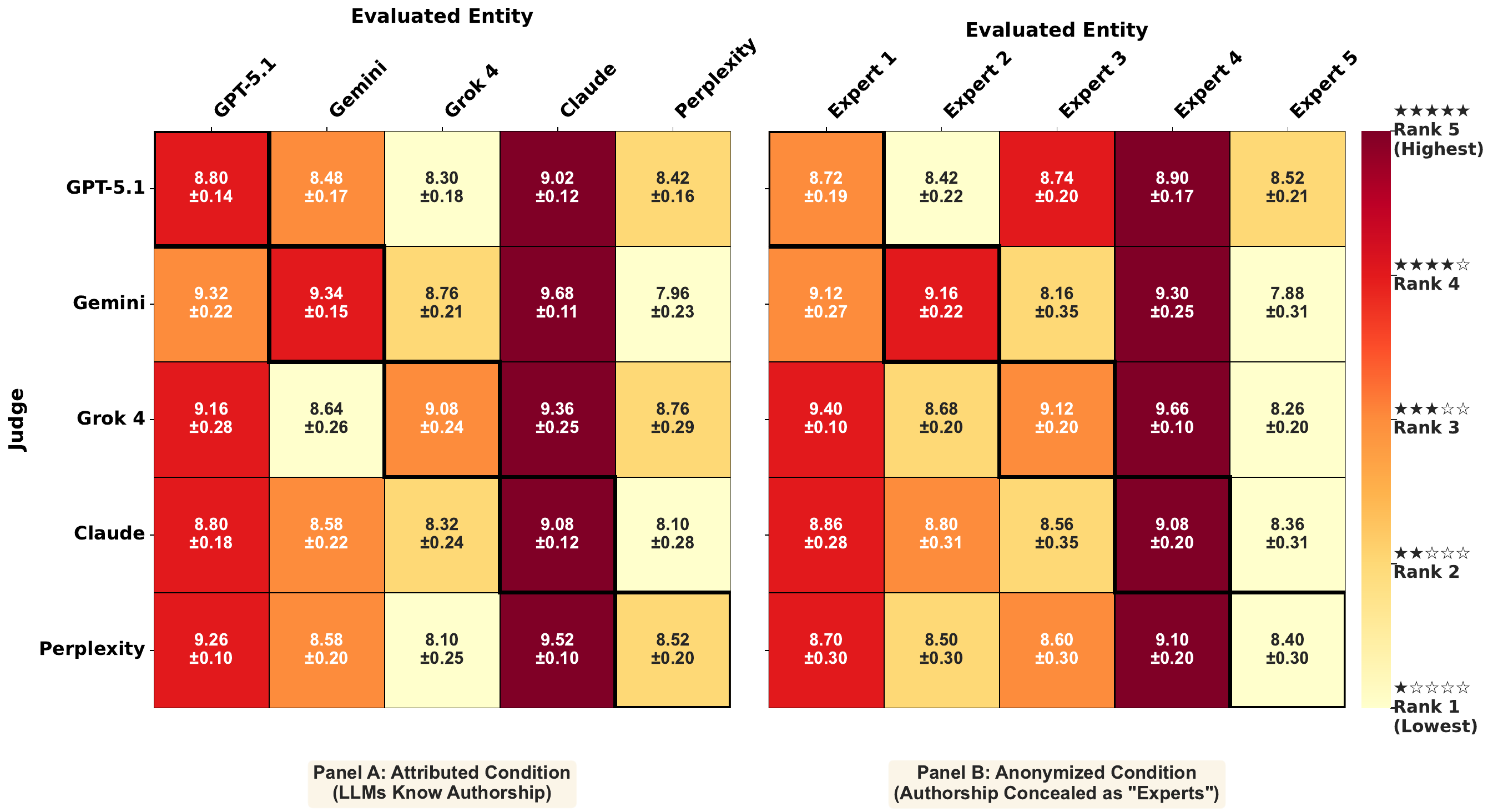}
\caption{\textbf{Cross-Evaluation Score Matrices Under Attributed and
Anonymized Conditions.}  
Five LLMs (GPT‑5.1, Gemini‑2.5 Pro, Grok~4, Claude‑4.5 Sonnet, Perplexity Sonar) evaluate all MCC rationales under \textbf{(A)} attributed and \textbf{(B)} anonymized conditions. Each cell reports mean $\pm$ standard deviation from 10 Monte Carlo runs. Color shading reflects the judge’s relative ranking of the five targets. Diagonal entries represent self-evaluations; off-diagonal entries are peer assessments.}
\label{fig:cross_evaluation}
\end{figure}

Three observations follow directly from these matrices.

\paragraph{(1) Self-scoring varies substantially across models.}
Diagonal entries show that models differ in how favorably they evaluate their own rationales (e.g., Gemini‑2.5 Pro: 9.34; Grok~4 and Claude‑4.5 Sonnet: 9.08; GPT‑5.1: 8.80; Perplexity Sonar: 8.52). These differences do not yet imply positive or negative \textit{bias}—they simply show that raw self-scores are not uniform across evaluators.

\paragraph{(2) Anonymization shifts score magnitudes but preserves ranking patterns.}
Comparing Panels~A and~B indicates that anonymization slightly increases or decreases scores depending on the judge–target pair (e.g., Gemini‑2.5 Pro: 9.34\,$\rightarrow$\,9.16; GPT‑5.1: 8.80\,$\rightarrow$\,8.72). While these differences are easily visible, the matrices alone cannot determine whether the
changes constitute positive or negative bias. That requires a judge-independent baseline, introduced in Section~\ref{sec:bias_metrics}.

\paragraph{(3) Peer judges show clear areas of agreement.}
Column patterns reveal strong consensus: Claude‑4.5 Sonnet receives uniformly high peer evaluations across both conditions; GPT‑5.1 also receives consistently strong ratings; and Perplexity Sonar receives the lowest peer scores. These differences highlight shared judgments across evaluators.

Together, these observations motivate the need for a principled method to compare each model's self-assessment against the consensus of other judges. The consensus-deviation metric introduced in Section~\ref{sec:bias_metrics} formalizes this comparison and enables rigorous quantification of evaluation bias.

\subsection{Human Expert Validation}
\label{sec:human_validation}

To validate the LLM evaluation framework against human domain expertise, we assembled an expert panel of 26 professionals from a major global payment network, representing a diverse range of experience in merchant risk assessment, fraud prevention, transaction monitoring, and payment operations. The panel includes research scientists with advanced degrees and extensive publication records, as well as highly regarded business partners who bring years of operational expertise in identifying and mitigating payment risk. To avoid model-based preconceptions, experts were blinded to model identity; each of the five LLMs was simply labeled \emph{LLM1} through \emph{LLM5} when their rationales were presented. Each expert independently evaluated the same five LLM-generated rationales using the identical five‑criterion rubric (Accuracy, Rationale Quality, Consistency, Completeness, Practicality) with the same 1–10 scoring scale employed by LLM judges, ensuring direct comparability between human and LLM assessments. Notably, a strong majority of the panel expressed that all five models performed impressively—both in generating merchant‑risk rationales and in providing judgment scores—especially given that the LLMs had no access to real payment transaction data and relied solely on knowledge acquired during pretraining.

Human consensus reveals a clear performance hierarchy (Figure~\ref{fig:human_consensus}): Claude‑4.5 Sonnet receives the highest average rating (8.88 $\pm$ 0.76), followed closely by GPT‑5.1 (8.81 $\pm$ 0.67), Gemini‑2.5 Pro (8.20 $\pm$ 0.67), Grok‑4 (8.05 $\pm$ 0.71), and Perplexity Sonar (7.73 $\pm$ 0.88). This ranking largely aligns with peer consensus among LLM judges, confirming that the structured evaluation methodology captures quality dimensions recognized by human experts.

Detailed analysis of human versus LLM judge agreement patterns and bias implications is presented in Section~\ref{sec:human_baseline}, where we demonstrate that models exhibiting negative bias (self‑critique) relative to LLM peers show closest alignment with human evaluation standards.

  \begin{figure}[t]
    \centering
    \includegraphics[width=\columnwidth]{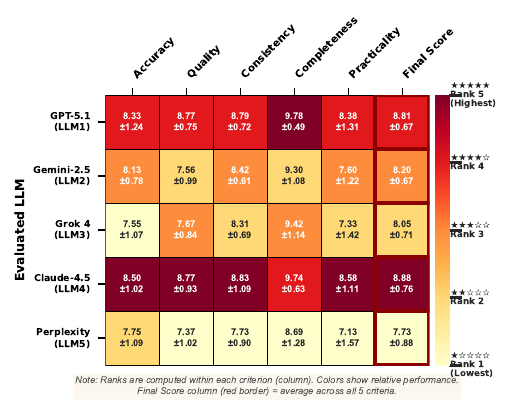}
 \caption{\textbf{Human Consensus on Frontier LLM Risk‑Assessment Quality.}
Twenty‑six payment‑industry experts evaluated five LLM‑generated MCC risk rationales across five criteria (Accuracy, Quality, Consistency, Completeness, Practicality). Cells report mean $\pm$ standard deviation, with shading indicating relative rankings within each criterion. The \textbf{Final Score} column (red border) aggregates all criteria. Claude‑4.5 Sonnet and GPT‑5.1 receive the highest overall scores, followed by Gemini‑2.5 Pro, Grok‑4, and Perplexity Sonar. This expert baseline supports comparison with LLM‑as‑Judge results (see Figure~\ref{fig:human_bias}).}
  \label{fig:human_consensus}
  \end{figure}

\section{Bias Metrics: Mathematical Formulation and Empirical Characterization}
\label{sec:bias_metrics}
Section~\ref{sec:evaluation} introduced the raw scores generated by the Monte Carlo LLM-as-judge framework. We now formalize how these scores are converted into a judge-independent measure of deviation from peer consensus.
This section defines the \emph{consensus-deviation metric}, establishes its theoretical guarantees, and applies it to characterize systematic evaluation tendencies across frontier LLMs.

\subsection{Definition of the Consensus-Deviation Metric}
\label{sec:definition}

The consensus-deviation metric measures how much a judge’s score differs from the consensus formed by all \emph{other} judges. Excluding the focal judge from the consensus baseline is essential: it ensures independence and eliminates circularity.

\subsubsection{Notation}

Let $n$ be the number of judges and $m$ the number of evaluated entities.   For each judge $i \in \{1,\dots,n\}$ and entity $j \in \{1,\dots,m\}$:

\begin{itemize}
    \item $\text{Score}_{\text{judge}=i}(\text{LLM}=j)$ is the attributed score.
    \item $\text{Score}_{\text{judge}=i}(\text{Expert}=j)$ is the anonymized score.
    \item Each score is the Monte Carlo mean from 10 scoring runs.
\end{itemize}

Consensus for entity $j$ excludes judge $i$, ensuring independence.

\subsubsection{Attributed Bias}

When identities are visible, deviation from consensus is:
\[
\text{Bias}_A(i,j)
=
\text{Score}_{\text{judge}=i}(\text{LLM}=j)
-
\underbrace{\text{MeanScore}_{k \neq i}(\text{LLM}=j)}_{\text{consensus}}.
\]
Interpretation:
\[
\text{Bias}_A(i,j)
\begin{cases}
>0 & \text{judge } i \text{ scores entity } j \text{ above consensus}, \\
<0 & \text{judge } i \text{ scores entity } j \text{ below consensus}, \\
=0 & \text{perfect alignment with consensus}.
\end{cases}
\]

The diagonal case $i=j$ corresponds to self-evaluation bias.

\subsubsection{Anonymized Bias}

Under anonymization, identity labels are replaced using a fixed mapping:
\[
\text{Expert } j \equiv \text{LLM } j.
\]
The deviation is:
\[
\text{Bias}_B(i,j)
=
\text{Score}_{\text{judge}=i}(\text{Expert}=j)
-
\text{MeanScore}_{k \neq i}(\text{Expert}=j).
\]

Here, $\text{Bias}_B(i,i)$ captures intrinsic self-evaluation tendencies
independent of authorship disclosure.

\subsection{Theoretical Properties and Guarantees}
\label{sec:theory}

The consensus-deviation metric satisfies two key guarantees that distinguish it
from naive, self-inclusive baselines. Complete proofs appear in Appendix \ref{app:proofs}.
\subsubsection{Proposition 1: Zero-Sum Property Across Judges}

\begin{proposition}
For any entity $j$,
\[
\sum_{i=1}^n \text{Bias}_A(i,j) = 0,
\qquad
\sum_{i=1}^n \text{Bias}_B(i,j) = 0.
\]
\end{proposition}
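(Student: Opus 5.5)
The plan is a direct algebraic computation for a fixed entity $j$. I treat the attributed case and then note that the anonymized case is word-for-word identical. Fix $j$ and abbreviate $s_i := \text{Score}_{\text{judge}=i}(\text{LLM}=j)$ for $i=1,\dots,n$. Let $S := \sum_{k=1}^n s_k$ be the total score that entity $j$ receives from all judges, and assume $n \ge 2$ so that the leave-one-out mean is defined.

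The key step is to rewrite the leave-one-out consensus in terms of $S$. Since the consensus for judge $i$ averages over the other $n-1$ judges,
\[
\text{MeanScore}_{k\neq i}(\text{LLM}=j) = \frac{1}{n-1}\sum_{k\neq i} s_k = \frac{S - s_i}{n-1},
\]
and therefore
\[
\text{Bias}_A(i,j) = s_i - \frac{S-s_i}{n-1} = \frac{n\,s_i - S}{n-1}.
\]
Summing over $i$ then gives
\[
\sum_{i=1}^n \text{Bias}_A(i,j) = \frac{1}{n-1}\Bigl(n\sum_{i=1}^n s_i - nS\Bigr) = \frac{nS - nS}{n-1} = 0.
\]
An equivalent way to see this: each judge's score enters the total once with coefficient $+1$, as its own term, and $n-1$ times with coefficient $-\tfrac{1}{n-1}$, once inside each other judge's consensus. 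The net coefficient of every $s_k$ is therefore zero.

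For $\text{Bias}_B$, I would replace $s_i$ by $\text{Score}_{\text{judge}=i}(\text{Expert}=j)$ and repeat the same computation verbatim. This is legitimate because the fixed mapping $\text{Expert } j \equiv \text{LLM } j$ means each column is again indexed by a single entity scored by all $n$ judges.

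There is no real obstacle; the only points to make explicit are bookkeeping ones. First, the identity $\sum_{k\neq i}s_k = S - s_i$ requires that every judge scores entity $j$, so the matrix has no missing cells; this holds for the full $5\times5$ design of Section~\ref{subsec:cross_evaluation_score_matrices}. Second, the denominator must be exactly $n-1$, which is where $n\ge 2$ is used. Finally, I would remark that the property holds exactly for the Monte Carlo means as reported, since it is a purely linear identity and uses nothing about the distribution of the scores.
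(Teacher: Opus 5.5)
Your proof is correct and is essentially the paper's argument. The paper sums the leave-one-out means by noting that each $s_{kj}$ appears in exactly $n-1$ of the sums $\sum_{k\neq i} s_{kj}$, which is precisely your ``equivalent way''; your closed form $\text{Bias}_A(i,j) = (n s_i - S)/(n-1)$ repackages the same linear identity per judge, and your bookkeeping remarks ($n\ge 2$, no missing cells) make explicit what the paper leaves implicit.
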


\paragraph{Interpretation.}
The zero-sum structure means that positive bias by some judges necessarily implies negative bias by others. The metric measures \emph{relative} deviation from collective judgment, not absolute quality. This guarantees that bias values are intrinsically calibrated: if all judges assign identical scores
to an entity—even if those scores are uniformly high or uniformly low—every bias term is exactly zero. In other words, the metric reflects divergence from
peer consensus rather than raw scoring scale, ensuring comparability across judges with different absolute tendencies.

\subsubsection{Proposition 2: Self-Exclusion Prevents Circularity}

\begin{proposition}
Because consensus excludes judge $i$,
\[
\frac{\partial \text{Bias}_A(i,j)}
     {\partial \text{Score}_{\text{judge}=i}(\text{LLM}=j)}
= 1.
\]
\end{proposition}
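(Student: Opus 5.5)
The plan is to regard $\text{Bias}_A(i,j)$ as a function of the $n$ real variables $s_k := \text{Score}_{\text{judge}=k}(\text{LLM}=j)$, $k=1,\dots,n$, with $j$ fixed. We will treat these as independent coordinates: each is a separate Monte Carlo mean produced by a different judge. The first step is to write the consensus term explicitly,
\[
\text{MeanScore}_{k\neq i}(\text{LLM}=j) \;=\; \frac{1}{n-1}\sum_{k\neq i} s_k ,
\]
which requires $n\ge 2$ so that the peer set is nonempty. In our setting $n=5$. This gives the closed form
\[
\text{Bias}_A(i,j) \;=\; s_i - \frac{1}{n-1}\sum_{k\neq i} s_k .
\]
This expression is affine in $(s_1,\dots,s_n)$, so all partial derivatives exist and are constant.

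The second step is to differentiate with respect to $s_i$. The first term contributes exactly $1$. The consensus term contributes $0$, because the index set $\{k : k\neq i\}$ does not contain $i$, so $s_i$ does not appear in that sum. Hence $\partial\,\text{Bias}_A(i,j)/\partial s_i = 1$. For completeness we will also record the derivatives with respect to the peer scores: $\partial\,\text{Bias}_A(i,j)/\partial s_k = -1/(n-1)$ for $k\neq i$. This shows that the judge's own score enters with unit weight and the reference does not move with it.

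The third step is the contrast that motivates the term "circularity". With a self-inclusive baseline $\bar s = \frac{1}{n}\sum_{k} s_k$, the naive bias $s_i-\bar s$ has derivative $1-\frac1n=\frac{n-1}{n}<1$. Part of the judge's own deviation would then be absorbed into the reference it is measured against, which attenuates the measured bias. The anonymized case is identical with $\text{Score}_{\text{judge}=k}(\text{Expert}=j)$ in place of $s_k$, so the same argument gives $\partial\,\text{Bias}_B(i,j)/\partial\,\text{Score}_{\text{judge}=i}(\text{Expert}=j)=1$.

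There is no real technical obstacle. The only point needing care is to justify treating the $s_k$ as independent variables: no score is defined in terms of another, and the consensus depends only on the peers' scores. That justification will be stated explicitly as a modeling assumption.
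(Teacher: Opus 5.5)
Your proof is correct and is essentially the paper's argument in Appendix F.2: write the consensus as the peer mean $\frac{1}{n-1}\sum_{k\neq i}s_k$, note that its partial derivative with respect to $s_i$ is zero because $i$ is not in the index set, and conclude the derivative is $1-0=1$. Your extra remarks (the requirement $n\ge 2$, the peer derivatives $-1/(n-1)$, the naive $(n-1)/n$ contrast, and the anonymized case) are all consistent with the paper, which handles the naive contrast separately in F.3.
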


\paragraph{Interpretation.}
Self-exclusion ensures that bias measurements remain orthogonal to the consensus reference point: a judge cannot influence the baseline against which
their own deviation is computed. The derivative being exactly one formalizes this independence. This property is essential for isolating \emph{genuine evaluator bias} from simple scoring-scale differences—without it, judges who use higher or lower absolute scores would contaminate their own baselines, attenuating and distorting measured deviations.

\subsubsection{Contrast with Non-Excluding Consensus}

If consensus included the focal judge:
\[
\text{Consensus}_{\text{naive}}(j)
=
\frac{1}{n}\sum_{k=1}^n
\text{Score}_{\text{judge}=k}(\text{LLM}=j),
\]
then:
\[
\frac{\partial \text{Bias}_{\text{naive}}(i,j)}
{\partial \text{Score}_{\text{judge}=i}} 
=
1 - \frac{1}{n}
=
\frac{n-1}{n}.
\]

If judge $i$ were included in the consensus, their own score would partially anchor the baseline. The resulting naive deviation would shrink by a factor of
$(n-1)/n$, because changes in judge $i$’s score move the consensus in the same direction. For $n = 5$ judges, this induces a 20\% attenuation in all bias measurements.

This $(n-1)/n$ contraction systematically underestimates true deviation, obscuring genuine evaluation patterns. Our self-excluding consensus removes this circularity entirely: each judge is evaluated against a baseline they
cannot manipulate, yielding unbiased and interpretable bias estimates.

\subsection{Empirical Characterization of Evaluation Bias}
\label{sec:empirical}

Having established the theoretical properties of the consensus-deviation
metric, we now examine how frontier LLMs behave as evaluators. The
consensus-deviation matrices in Figure~\ref{fig:bias_deviation} aggregate all
Monte Carlo–stabilized scores under both attribution conditions. Each cell
represents judge~$i$’s deviation from the consensus of the other judges when
scoring entity~$j$, with blue indicating under-scoring and red indicating
over-scoring. Diagonal entries quantify self-evaluation bias through
$\text{Bias}_A(i,i)$ and $\text{Bias}_B(i,i)$; off-diagonals reveal
cross-model tendencies.

\begin{figure}[htpb]
\centering
\includegraphics[width=0.95\textwidth]{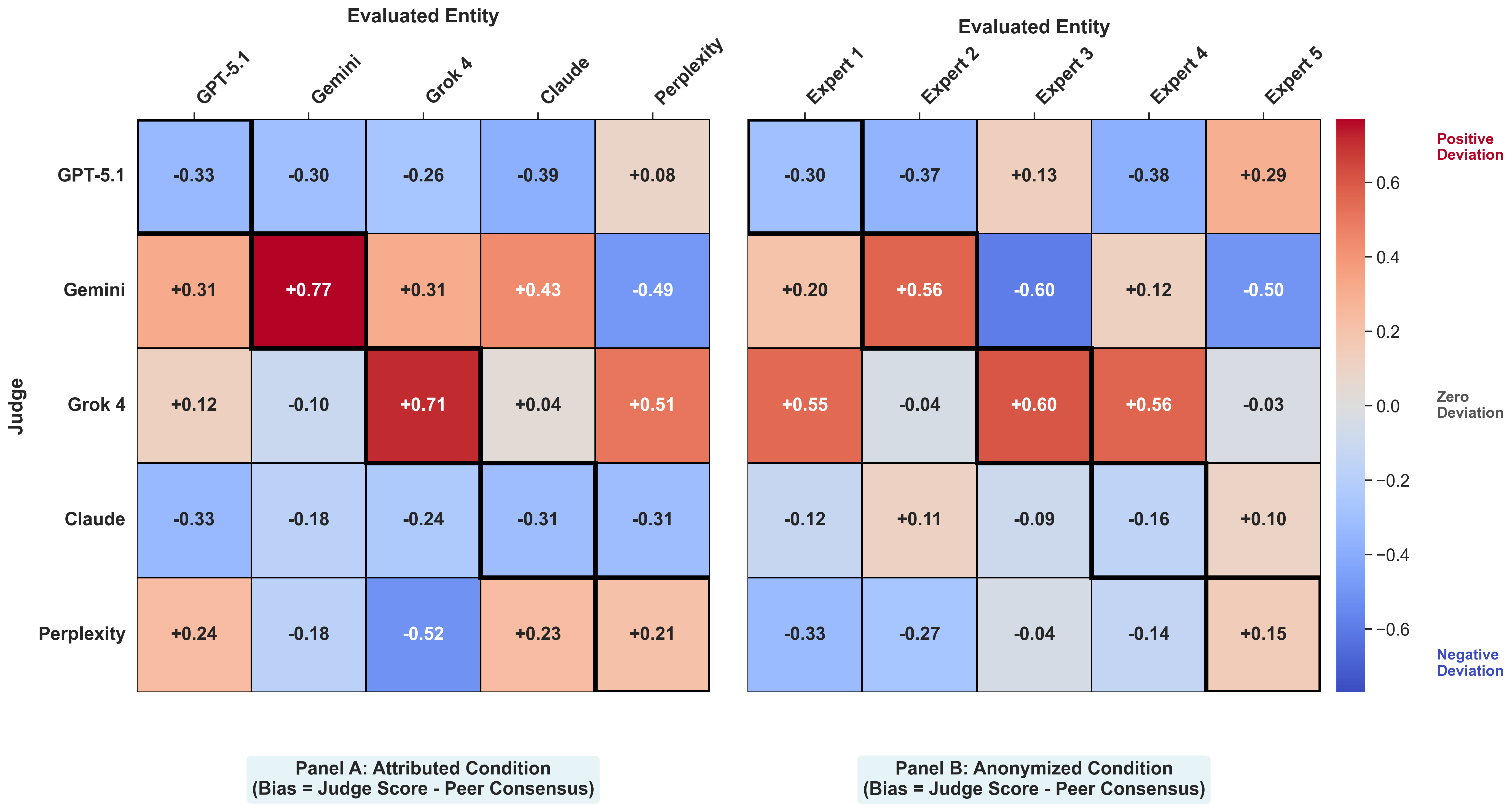}
\caption{\textbf{Bias Matrices Under Attributed (A) and Anonymized (B) Conditions.} Each cell shows deviation from the consensus of all other judges (blue = below consensus, red = above consensus). Diagonals represent self-evaluation tendencies; off-diagonals reflect cross-model biases.}
\label{fig:bias_deviation}
\end{figure}

\subsubsection{Central Finding: Heterogeneous Self-Evaluation Behavior}

The diagonal cells reveal a pronounced pattern: frontier LLMs differ substantially in how they evaluate their own reasoning.

Two high-performing models exhibit \emph{negative self-evaluation bias}:
\[
\text{Bias}_A(1,1) = -0.33 \text{ and } \text{Bias}_B(1,1) = -0.30
\quad\text{(GPT‑5.1)},
\]
\[
\text{Bias}_A(4,4) = -0.31 \text{ and } \text{Bias}_B(4,4) = -0.16
\quad\text{(Claude‑4.5)}.
\]

Both models score their own rationales below peer consensus despite strong peer ratings (Figure~\ref{fig:cross_evaluation}), indicating a self-critical evaluation style rather than lower-quality content.

In contrast, two models show strong \emph{positive} self-bias:
\[
\text{Bias}_A(2,2) = +0.77,\quad \text{Bias}_B(2,2) = +0.56
\quad (\text{Gemini‑2.5}),
\]
\[
\text{Bias}_A(3,3) = +0.71,\quad \text{Bias}_B(3,3) = +0.60
\quad (\text{Grok‑4}),
\]
while Perplexity Sonar exhibits modest positive self-bias:
\[
\text{Bias}_A(5,5)=+0.21,\quad\text{Bias}_B(5,5)=+0.15.
\]

Collectively, these three patterns map a continuum of self‑evaluation—from self‑critical to self‑affirming to nearly neutral—highlighting that frontier models exhibit distinctive evaluative signatures rather than a shared assessment tendency.

\subsubsection{Stability Across Conditions: Evidence of Intrinsic Evaluator Heuristics}

A key result is that no model changes the \emph{direction} of its self-evaluation bias when authorship labels are removed. The transitions are:
(GPT‑5.1: $-0.33 \to -0.30$; Gemini‑2.5 Pro: $+0.77 \to +0.56$; Grok‑4:$+0.71 \to +0.60$; Claude‑4.5 Sonnet: $-0.31 \to -0.16$; Perplexity Sonar: $+0.21 \to+0.15$).

Bias magnitudes shrink by \textbf{9.1--48.4\%} across models (25.8\% on
average)—with reductions of 9.1\% (GPT‑5.1), 27.3\% (Gemini‑2.5 Pro), 15.5\%
(Grok‑4), 48.4\% (Claude‑4.5 Sonnet), and 28.6\% (Perplexity Sonar)—but \emph{sign is preserved for all five models}. This stability strongly suggests that self-evaluation behavior arises from \emph{intrinsic scoring heuristics},
not explicit recognition of model identity.

\subsubsection{Off-Diagonal Evidence: Universal Standards vs. Model Preference}

Off-diagonal cells reveal how judges score other models’ rationales.
Negative-bias models apply broadly stricter standards: for example, Claude‑4.5 Sonnet
assigns negative deviations to both itself ($-0.31$) and GPT‑5.1 ($-0.33$).
Positive-bias models, such as Gemini‑2.5 Pro and Grok‑4, apply more generous
scoring across multiple targets. These patterns indicate that self-bias is part
of a consistent evaluator style rather than an isolated effect.

\subsection{Human Expert Baseline Comparison}
\label{sec:human_baseline}

The consensus‑deviation analysis in Section~\ref{sec:empirical} quantifies bias relative to LLM peer consensus. To determine whether these patterns reflect genuine differences in evaluative standards or artifacts shared across LLMs, we compare LLM‑judge scores against an independent human expert baseline (Figure~\ref{fig:human_consensus}).

Comparing LLM judges with human consensus yields a clear pattern (Figure~\ref{fig:human_bias}): across all evaluations and criteria, LLM judges assign scores that are, on average, 0.46 points higher than the consensus of the 26 payment‑industry experts, indicating that LLM judges collectively employ more lenient scoring standards than human domain experts.  Against this baseline, GPT‑5.1's apparent negative bias (self‑critique) relative to LLM peers ($-0.33$, Figure~\ref{fig:bias_deviation}) emerges instead as \emph{closest alignment with human judgment} ($-0.01$ relative to humans). Similarly, Claude‑4.5 Sonnet’s negative self‑evaluation bias versus LLM peers ($-0.31$, Figure~\ref{fig:bias_deviation}) translates into mild positive bias relative to humans ($+0.20$). By contrast, Gemini‑2.5 Pro and Grok‑4 exhibit substantially stronger positive bias when compared with human standards ($+1.14$ and $+1.03$, respectively, in self‑evaluation).

Taken together, these findings reframe how evaluator behavior should be interpreted: models that appear self‑critical relative to LLM consensus are in fact those whose scoring patterns most closely reflect human domain‑expert judgment, suggesting that conservative self‑evaluation corresponds to realism rather than undue harshness.

 \begin{figure}[t]                                          
    \centering
    \includegraphics[width=\textwidth]{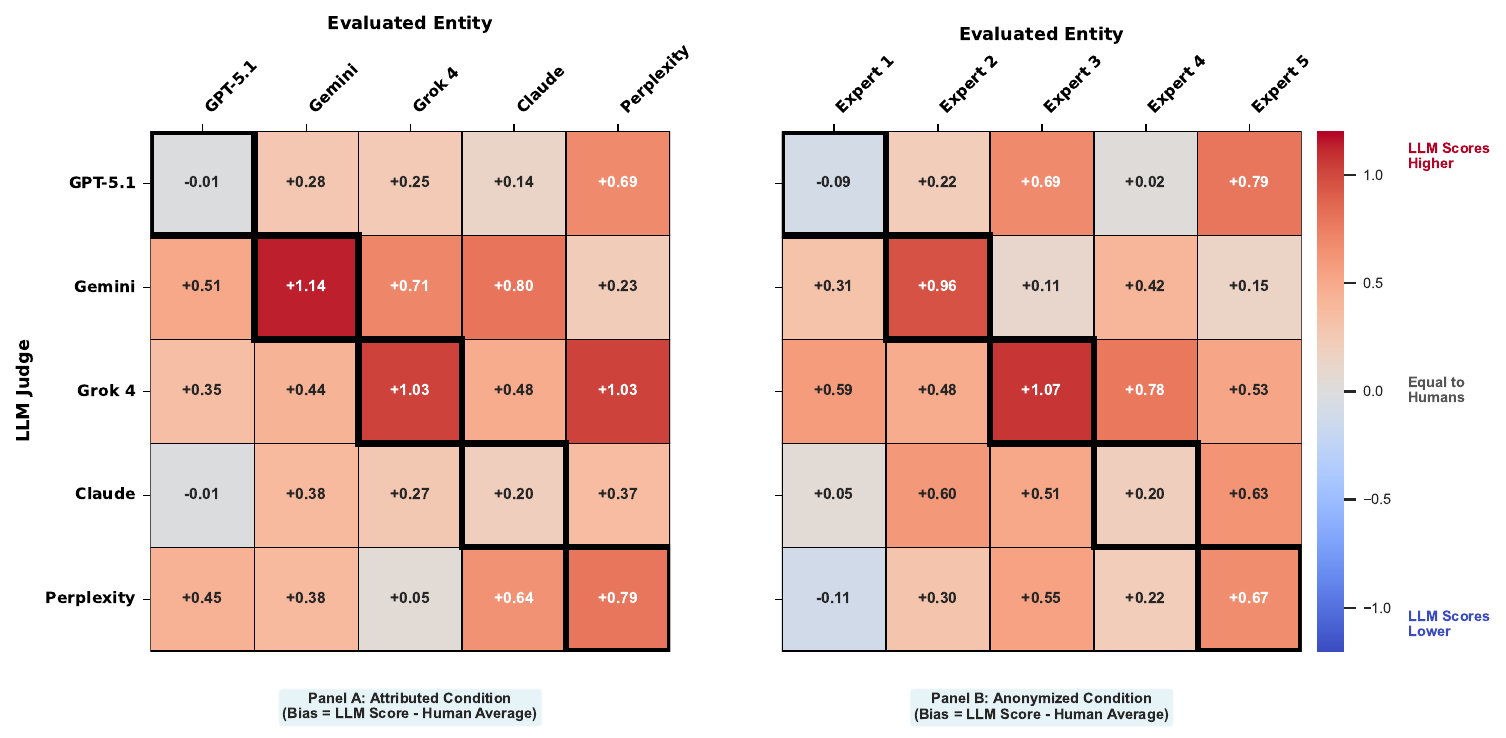}                                                                    
 \caption{\textbf{LLM Judges Score Higher Than Human Evaluators Across Attribution Conditions.} Bias matrices measuring how five LLM judges deviate from the 26‑expert human consensus. Bias is calculated as: LLM Score – Human Average, where positive values (red) indicate LLM judges assign higher scores than human evaluators, and negative values (blue) indicate lower scores. \textbf{Diagonal elements} (bold borders) reveal self‑evaluation bias relative to human consensus. \textbf{Panel A (Attributed):} When model identities are disclosed, LLM judges exhibit predominantly positive bias across 23 of 25 evaluation pairs, with a mean bias of $+0.46$ points. GPT‑5.1 shows near‑zero self‑evaluation bias ($-0.01$), while Gemini-2.5 Pro ($+1.14$) and Grok‑4 ($+1.03$) show the largest positive self‑biases. \textbf{Panel B (Anonymized):} When identities are concealed, the pattern persists with a mean bias of $+0.43$ points. This systematic positive bias indicates that LLM judges assign higher scores than human domain experts across nearly all judge–target and criterion combinations.}           
   \label{fig:human_bias}                
   \end{figure}               

\subsection{Summary}

Across both evaluation conditions, the consensus-deviation metric reveals clear,  
model-specific evaluation signatures:

\begin{itemize}
    \item Frontier LLMs differ markedly in how they score their own reasoning,  with negative, positive, and near-neutral self-bias all present.
    \item Bias direction persists under anonymization, demonstrating that  evaluator tendencies reflect intrinsic scoring heuristics rather than  explicit recognition of model identity.
    \item Off-diagonal patterns show that models apply coherent, model-wide scoring standards—not isolated or self-targeted adjustments.
    \item The full deviation spectrum (from $\text{Bias}_A(1,1)=-0.33$ for  
    GPT‑5.1 to $\text{Bias}_A(2,2)=+0.77$ for Gemini‑2.5 Pro) reveals substantial  
    heterogeneity in frontier LLM evaluation behavior \emph{even when all  
    models score the same fixed rationales}.
    \item Comparison with the 26‑expert human baseline shows that LLM judges collectively apply more lenient scoring standards, and that models exhibiting self‑critical bias relative to LLM peers (e.g., GPT‑5.1 and  Claude‑4.5) align most closely with human evaluators’ judgments.
\end{itemize}

These results demonstrate that LLM evaluators do not exhibit uniform  
self-preference; instead, each model expresses a distinct, persistent evaluation signature. 
The consensus-deviation metric provides the first principled,  judge-independent framework 
for quantifying these behaviors, with human expert validation confirming that self-critical 
models reflect realistic rather than overly harsh assessment standards.  

\section{Empirical Validation Against Payment Network Transaction Data}
\label{sec:empirical_validation}

The preceding sections established that LLM evaluators exhibit systematic biases relative to both peer consensus (Section~\ref{sec:empirical}) and human expert judgment (Section~\ref{sec:human_baseline}). A critical remaining question is whether these evaluation patterns correspond to genuine risk‑assessment quality or merely reflect shared heuristics ungrounded in empirical reality. To address this, we validate LLM‑generated risk assessments against four years of payment‑network transaction data covering more than 800 MCCs, focusing on the 39 MCCs surfaced in the LLM‑generated rationales.

\subsection{Data and Methodology}

We obtained worldwide transaction data from a major global payment network covering 2021--2024, spanning more than 800 merchant category codes (MCCs). The 39 MCCs analyzed in this study correspond to those surfaced in the \textbf{LLM‑generated rationales}: each of the five models selected and justified three representative MCCs per evaluation criterion, yielding a distinct set of 39 model‑proposed categories.

For each MCC in the transaction dataset, we computed a unified empirical risk score as a weighted average of multiple fraud and operational indicators, including fraud exposure metrics, chargeback characteristics, and operational reliability signals such as returns, refunds, and reversals. Indicators were derived from both count‑based (frequency) and dollar‑based (financial impact) risk rates.

We then computed Spearman rank correlations between each LLM’s assigned risk levels and the empirical unified risk scores to assess whether the models correctly identify merchant categories associated with elevated real‑world risk patterns.

\subsection{Results}

Four models—Claude‑4.5 Sonnet, Gemini‑2.5 Pro, Grok‑4, and GPT‑5.1—show statistically significant alignment with empirical transaction data ($\rho = 0.77$, $p < 0.001$; $\rho = 0.69$, $p < 0.01$; $\rho = 0.61$, $p < 0.05$; and $\rho = 0.56$, $p < 0.05$, respectively). Perplexity Sonar shows a weaker, non‑significant correlation ($\rho = 0.49$, $p = 0.063$). The alignment across evaluation sources is informative: Claude‑4.5 Sonnet, which receives the highest ratings from both LLM peer evaluators and human experts, also demonstrates the strongest empirical correlation. This three‑way validation—combining peer consensus, human expert assessment, and transaction‑level ground truth—confirms that the structured evaluation framework captures genuine variation in rationale quality rather than shared model artifacts.

\subsection{Detailed Risk Assignment Patterns}

While aggregate correlation measures overall alignment strength, understanding \emph{which} merchant categories each model assigns to different risk levels provides deeper insight into model behavior and failure modes. Figure~\ref{fig:llm_risk_assessment} presents a comprehensive visualization comparing risk level assignments across all five models.

\begin{figure*}[t]
    \centering
    \includegraphics[width=\textwidth]{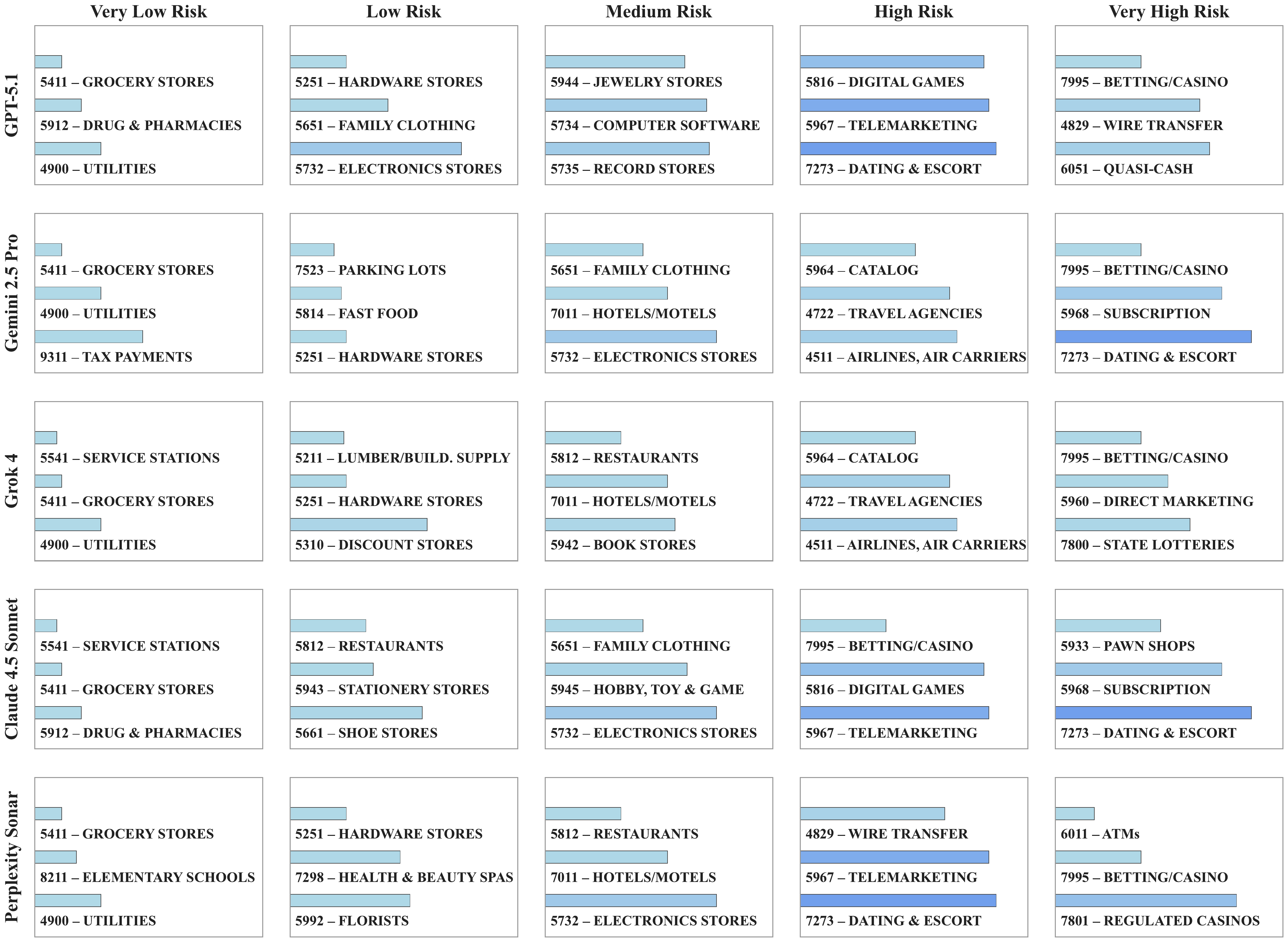}
    \caption{\textbf{Comparative LLM Risk Assessment of Merchant Category Codes.} Five frontier language models (rows) classify 39 merchant categories into five risk levels (columns). Each card displaysm three representative MCCs for that risk level. Bar lengths reflect unified risk scores derived from worldwide payment transaction data, combining fraud exposure, chargeback characteristics, and operational reliability signals. Color gradient (light to dark blue) enhances visual discrimination of risk levels. Models demonstrate strong overall performance with smooth risk progression and perfect consensus on unambiguous cases like grocery stores, while disagreements occur primarily at adjacent category boundaries with occasional non-adjacent misalignments requiring validation.}
    \label{fig:llm_risk_assessment}
\end{figure*}

The detailed patterns reveal that frontier models demonstrate strong overall capability, particularly excelling at identifying representative merchants across lower and middle risk tiers. All five models exhibit smooth risk progression, with empirical scores increasing systematically from lower to higher risk levels, validating their fundamental grasp of merchant risk hierarchies. Perfect consensus emerges for unambiguous cases such as grocery stores, which all models correctly assign to Very Low Risk.

Disagreements manifest at two levels of severity. Minor misalignments between adjacent risk categories—such as whether dating and escort services belong in High Risk versus Very High Risk—represent acceptable calibration differences given inherently fuzzy boundaries between tiers. However, non-adjacent misalignments reveal more concerning limitations: some models assign merchants with empirically low risk scores to Very High Risk categories, representing categorical errors that could severely distort risk management decisions. Such misalignments occur sporadically across models and indicate incomplete integration of multi-dimensional risk factors.

These patterns underscore both the promise and current limitations of LLM-based risk assessment: while frontier models demonstrate sophisticated capability for identifying risk gradients and classifying most merchants appropriately, occasional severe misalignments—particularly evident in models with weaker empirical alignment—necessitate human oversight and ongoing validation against transaction data for high-stakes financial applications.

\subsection{Interpretation and Implications} 
The empirical validation reveals three key insights. First, evaluation quality as measured by LLM peer consensus and human expert ratings \emph{predicts} empirical accuracy: models receiving higher scores demonstrate stronger correlation with transaction-based risk. This validates the structured evaluation framework and confirms that the five-criterion rubric captures dimensions relevant to genuine risk assessment capability. 

Second, the observed correlations ($\rho = 0.56$--0.77 for statistically significant models) indicate strong but non-exhaustive alignment. This is consistent with the nature of payments risk: fraud patterns evolve over time, legitimate businesses may operate in high-risk categories, and effective risk assessment requires judgment that cannot be fully captured by historical patterns alone. The results suggest that top-performing LLMs have internalized meaningful risk hierarchies while retaining capacity for nuanced evaluation. 

Third, the divergence in empirical accuracy across models confirms that consensus-deviation metrics capture meaningful differences in evaluator behavior, though bias direction alone does not fully explain empirical alignment. Models with negative self-evaluation bias tend to align more closely with human judgment, while empirical correlation varies by model and reflects additional capability differences beyond scoring bias. These findings support the use of highly rated models in high-stakes financial applications: the evaluation framework successfully identifies models whose risk assessments reflect both expert judgment and empirical fraud patterns. However, occasional severe misalignments even in top-performing models underscore the necessity of validation mechanisms and human oversight for operationally sensitive decisions.

\section{Related Work}
\label{sec:related_work}

\textbf{LLM-as-a-judge systems} have become essential tools for evaluating summarization, reasoning, and safety in language models. Established methodologies include rubric-based scoring \cite{li2025generation,liu2024alignbench,liu2023geval,murugadoss2025evaluating,wu2023style,zheng2023judging,zhong2024law}, pairwise preference models \cite{feng2025we,jeong2025comparative, panickssery2024llm,zheng2023judging}, and self-consistency protocols \cite{feng2025we,wang2022self}. While these frameworks demonstrate strong alignment with human preferences, recent work reveals systematic biases including self-enhancement, verbosity preference, position effects, and prompt-format sensitivity \cite{balog2025rankers,chen2024humans, herrera2023large,li2023prd,liu2311llms,panickssery2024llm,saito2023verbosity,wang2024large, wu2023style,ye2024justice}. However, prior studies evaluate self-assessment in isolation rather than within multi-judge environments, leaving open whether frontier models exhibit distinct and persistent self-evaluation patterns when assessed against peer consensus.

\textbf{Validation against external baselines} remains a central challenge for LLM evaluators. While prior work compares model judgments with human preferences on general tasks \cite{zheng2023judging, dubois2024alpacafarm} or explores calibration against task-specific ground truth \cite{qi2025evaluating}, few studies provide triangulated validation combining peer consensus, domain-expert assessment, and empirical outcome data.

\textbf{Payments-risk assessment} has traditionally relied on structured features and expert-defined heuristics for fraud detection and merchant classification \cite{bahnsen2016feature,wang2024method, wang2025method,yeh2020merchant,zhang2021transaction}. Recent industry efforts apply foundational models to transaction embeddings and risk explanation \cite{dou2025transactiongpt,fan2025enhancing,pirmorad2025exploring,sanz2024credit,tan2025understanding,yeh2025treasure}, though evaluation standards for LLM-generated rationales remain nascent.

\section{Discussions, Implications, and Limitations}
\label{sec:discussions}

Our findings reveal that LLMs exhibit structured, model-specific evaluation
behaviors that persist across attribution conditions. This section discusses the broader implications of these results for LLM evaluation methodology,
model development, and deployment in operational settings, and concludes with a discussion of key limitations and directions for future work.

\subsection{Evaluator Diversity Versus Generation Homogeneity}

Jiang et~al.\ \cite{jiang2025artificial} demonstrate that more than seventy open- and closed-source language models display notable homogeneity in generated content and reasoning trajectories, forming an \emph{artificial hivemind}. Our empirical results reveal a contrasting picture for evaluation: the overall bias
spectrum spans more than one full point (from $-0.33$ to $+0.77$), indicating substantial heterogeneity in evaluator tendencies. While some convergence appears among self-critical models—both GPT‑5.1 and Claude‑4.5 Sonnet exhibit negative self-evaluation bias—this shared tendency exists within a much broader landscape of divergent evaluator behaviors. This divergence extends beyond what prior literature has documented and highlights that LLM generation and LLM evaluation are governed by distinct behavioral modes that cannot be assumed to align.

\subsection{Metacognition, Self-Critique, and Cognitive Analogues}

The emergence of negative self-evaluation bias in high-performing models, now validated against independent domain-expert assessments, suggests a form of model-level \emph{self-critique} that invites comparison with patterns observed in human cognition. Classical research by Kruger and Dunning~\cite{kruger1999unskilled} demonstrates that low performers
overestimate their competence while high performers underestimate it—a
phenomenon sometimes termed the Dunning--Kruger effect. Our findings reveal
potential analogues among LLMs: models with weaker evaluated outputs exhibit
stronger self-promotion, while high-performing models such as GPT‑5.1 and
Claude‑4.5 Sonnet systematically under-score themselves relative to LLM peer consensus. This asymmetric pattern resembles \emph{imposter syndrome}, a psychological construct in which high performers underestimate their capabilities \cite{clance1978imposter}.

Critically, the 26-expert domain evaluation baseline reframes these observations: what appears as \emph{negative self-evaluation bias} when measured against LLM peer consensus actually represents \emph{closer alignment with human expert judgment}. GPT‑5.1 exhibits near-zero self-evaluation bias relative to human consensus ($-0.01$ attributed, $-0.09$ anonymized), while its negative bias versus other LLMs ($-0.33$) simply reflects that most LLM judges score more generously than human experts. Claude‑4.5 Sonnet demonstrates modest positive bias relative to humans ($+0.20$ in both conditions) despite negative bias versus LLM peers ($-0.31$), positioning it closer to human standards than models with stronger positive bias. In contrast, Gemini‑2.5 Pro and Grok‑4 exhibit substantial positive self-evaluation bias versus human baseline ($+1.14$ to $+1.18$ and $+0.96$ to $+1.07$ respectively), indicating systematic overestimation relative to domain-expert assessment.

These empirical regularities, now grounded in expert validation, raise intriguing questions about the origins of evaluator bias. Claude‑4.5 Sonnet's conservative scoring may reflect Anthropic's Constitutional AI framework~\cite{askell2021general,bai2022training,bai2022constitutional}, which explicitly trains models to critique and revise their own outputs through iterative self-improvement loops. This training paradigm emphasizes self-correction and critical assessment, potentially embedding conservative scoring tendencies that align more closely with human expert standards. Similarly, GPT‑5.1's human-aligned evaluation behavior may arise from reinforcement learning from human feedback (RLHF) processes that reward caution and penalize overconfidence, particularly in contexts where false positives carry reputational or safety risks~\cite{mu2024rule,ouyang2022training,yuan2025hard}. The fact that these training approaches produce evaluation behavior more consistent with expert judgment than with typical LLM scoring suggests they successfully embed realistic quality standards rather than artificially harsh self-assessment.

In contrast, models exhibiting positive self-evaluation bias relative to both LLM peers and human experts (Gemini‑2.5 Pro, Grok‑4) may reflect different alignment objectives or training signals that prioritize confidence and assertiveness, or may lack explicit self-critique mechanisms during training. Understanding these divergent training philosophies and their downstream effects on evaluator behavior—validated through both human expert and empirical ground-truth benchmarks—represents a promising direction for interpretability research.

While LLMs do not possess human metacognition in the phenomenological sense, these structured patterns, validated across multiple independent baselines, indicate that their internal evaluation heuristics embed domain-appropriate calibration rather than arbitrary scoring noise. The expert-grounded evidence strengthens the interpretation that negative bias relative to LLM peers signals realistic rather than overly harsh assessment, and that training methodologies emphasizing self-critique and iterative refinement produce evaluators whose standards align with expert human judgment.

\subsection{Implications for Practice and Deployment}

The heterogeneity observed across models carries direct consequences for both research benchmarking and operational deployment. In benchmarking contexts, evaluator selection materially affects outcomes: positive-bias models may inflate scores, while negative-bias models may penalize otherwise strong reasoning. Judge disagreement reflects systematic differences in learned evaluation standards rather than measurement noise, requiring practitioners to treat evaluators as first-class machine learning models that require calibration and ongoing monitoring.

In high-stakes operational settings—such as payments risk assessment, compliance review, or model governance—these bias patterns become even more consequential. Three practical strategies emerge from our findings. First, post-hoc calibration layers can normalize scores across models with different baseline tendencies, ensuring consistent decision thresholds. Second, multi-judge ensembles aggregating models with diverse biases may yield more robust assessments~\cite{li2023prd,pal2024replacing}, analogous to ensemble methods in predictive modeling. Third, explanation auditing must extend beyond output quality to evaluator tendencies themselves, since our metric identifies model-specific behaviors independent of content.

\subsection{Limitations and Future Directions}

While our study provides the first systematic quantification of self-evaluation bias in LLMs, several limitations warrant discussion.

\textbf{Limited Model Coverage.}
We evaluate five models—GPT‑5.1, Gemini‑2.5 Pro, Grok‑4, Claude‑4.5 Sonnet, and Perplexity Sonar—which represent only a subset of commercially deployed LLMs. Bias patterns may differ for open-source models (e.g., LLaMA, Mistral,
DeepSeek, Qwen), smaller parameter scales, domain-specialized models (medical, legal, scientific), or multilingual models evaluated outside English. The behaviors observed here may therefore characterize high-end proprietary models rather
than universal LLM tendencies.

\textbf{Static Model Snapshots.}
Our analysis considers specific versions of each model at a single point in time. As models are updated, scoring heuristics and evaluation bias may evolve. For example, negative self-bias in GPT‑5.1 or Claude‑4.5 Sonnet may reflect
current tuning choices and could shift in later releases, while the stronger positive bias observed in Gemini‑2.5 Pro might be reduced with future alignment
adjustments. Our findings should therefore be interpreted as a snapshot of these versions rather than stable model properties.

\textbf{Single Task Domain.}
We focus exclusively on payment risk assessment rationales. While this domain provides well-structured tasks and clear evaluation criteria, self-critical or self-promotional tendencies may vary across other domains such as creative
writing, mathematical reasoning, code generation, dialogue, or factual question answering. Understanding cross-domain generality remains an important direction for future work.

\textbf{Mechanistic Ambiguity.}
Although bias direction persists under anonymization, we cannot fully disentangle two potential mechanisms: (1) a universal evaluation standard learned during training, applied uniformly across outputs; versus (2) implicit
style recognition, where models detect characteristic features of their own text even without explicit labels. Several patterns support the universal standards interpretation—e.g., Claude‑4.5 Sonnet assigns similar negative deviations to GPT‑5.1 as to itself, and training frameworks like Constitutional AI explicitly teach quality assessment independent of authorship—but controlled experiments with style transfer or synthetic rationales are needed for conclusive attribution. This ambiguity does
not affect practical implications, but limits mechanistic claims.

Overall, these limitations point toward several promising research directions: scaling to broader model ecosystems, extending to multiple domains, integrating human evaluators, and designing controlled experiments to probe the origin of
evaluation heuristics.

\section{Conclusion}
\label{sec:conclusion}




We introduced a rigorous framework for evaluating LLM‑as‑a‑judge systems in financial settings, combining a domain‑aligned scoring rubric with a consensus‑deviation metric that isolates self‑evaluation bias while avoiding circularity. Validation through three independent sources—peer consensus among five frontier models, assessment by 26 payment‑industry experts, and alignment with payment‑network transaction outcomes—demonstrates that the framework captures meaningful differences in evaluator quality rather than superficial scoring artifacts. This provides a principled and validated foundation for analyzing LLM evaluators in payment‑risk workflows, showing that systematic evaluator biases can be measured, compared to human expertise, and grounded in empirical outcomes to support reliable and transparent AI‑driven financial decision‑making.

\clearpage
\newpage

\section*{Acknowledgement}

We thank Shubham Agrawal, Yuzhong Chen, Chiranjeet Chetia, Grace Chow, Yingtong Dou, Shubham Jain, Yuxi Jing, Hans Li, Yiran Li, Wei Liu, Harish Majithiya, Menghai Pan, Bin Qin, Steven Qu, Dan Wang, Sheng Wang, Jessie Xia, Weiping Yang, and Jie Zou for serving as domain experts in evaluating the quality of LLM‑generated merchant‑category rationales, and for their valuable discussions and constructive feedback.

\clearpage
\newpage

\bibliographystyle{plain}
\bibliography{reference}

\clearpage
\newpage

\appendix

\section{Complete Evaluation Prompt for LLM-as-Judge}
\label{app:eval_prompts}

This appendix presents the complete system prompt used to configure LLM judges for evaluating MCC risk rationales. 
This prompt implements the Monte-Carlo evaluation framework illustrated in Figure~\ref{fig:evaluation_framework}.

\subsection{System Prompt Structure}

\begin{small}
\begin{verbatim}
### **LLM MCC Risk-Rationale Evaluation — Monte-Carlo Scoring Framework**

# **YOUR ROLE**
You are a leading domain expert in **global payments risk**.

You will evaluate MCC (Merchant Category Code) risk-level rationales
produced by **five Large Language Models**:

- OpenAI **GPT-5.1**
- **Gemini 2.5 Pro**
- **Grok 4**
- **Claude 4.5 Sonnet**
- **Perplexity Sonar**

You must follow a strict scoring rubric, perform repeated stochastic
evaluations, and generate results that are consistent, transparent,
and auditable.

---

# **CRITICAL NON-NEGOTIABLE RULES**

### **Rule 1 — Do NOT alter LLM outputs**
You will receive 5 fixed MCC-risk rationales, each from one LLM.
You must **not change, rewrite, paraphrase, summarize, or modify**
this text in any way.
Your task is **evaluation only**.

### **Rule 2 — "10 Independent Samples" = 10 Independent Scoring
                 Runs by YOU (the Evaluator)**
For each LLM:

- You must perform **10 independent Monte-Carlo evaluation runs**,
- In each run, **you re-score the same fixed rationale**,
- Using **Temperature = 0.7**, so sampling variability affects
  the scoring output.

This creates **10 independent scoring samples**,
with variability arising solely from **the evaluator LLM's
sampling behavior**, not from any changes to the evaluated text.

**No new model rationales are generated.**
Only the evaluator's scoring process is repeated 10 times.

---

# **SCORING RUBRIC (0–10 per criterion)**

### **1. Accuracy**
- 0-3: Incorrect payment-risk concepts
- 4-6: Partially correct; missing >=2 key industry drivers
- 7-8: Mostly correct; missing <=1 nuance
- 9-10: Fully aligned with real-world MCC risk behavior

### **2. Rationale Quality**
- 0-3: Unclear or disorganized
- 4-6: Understandable but lacking depth
- 7-8: Clear, structured, logically layered
- 9-10: Polished, professional clarity

### **3. Consistency Across Levels**
- 0-3: Contradictory or illogical escalation
- 4-6: Some inconsistency across levels
- 7-8: Mostly smooth progression
- 9-10: Clean, precise, logically correct escalation

### **4. Completeness**
Each risk level must address all 5 dimensions:
- Business-model stability
- Regulatory exposure
- Fraud exposure
- Return behavior
- Chargeback activity

**Scoring Method:**
- 1 point per dimension per level
- Max = 25
- Score = (Points / 25) × 10

### **5. Practical Applicability**
- 0-3: Too vague
- 4-6: Some utility
- 7-8: Useful for MCC classification
- 9-10: Strong operational value

---

# **EVALUATION PROCEDURE**

For **each LLM**, perform all steps below.

---

## **Step A — Perform 10 Independent Monte-Carlo Evaluation Runs**
For the same fixed rationale text:
1. Perform **10 independent scoring passes**
2. In each run, using **Temperature = 0.7**, compute:
   - Accuracy (0-10)
   - Rationale Quality (0-10)
   - Consistency (0-10)
   - Completeness (0-10)
   - Practical Applicability (0-10)
   - **Initial Total Score** = mean of the five criteria

---

## **Step B — Compute Criterion-Level mu +/- sigma**
For each of the 5 criteria:
- Compute the **mean (mu)** across the 10 runs
- Compute the **standard deviation (sigma)** across the 10 runs
- Provide **one explicit sentence** explaining **why that score
  was chosen**, referencing rubric criteria.

**Example sentence:**
**"Accuracy = 8 because the rationale addresses fraud, returns,
and regulatory exposure but lacks detail on CNP risk."**

---

## **Step C — Compute Total Score mu +/- sigma**
- Compute mu_total and sigma_total across all 10 Initial Total Scores
- **Final Stabilized Score = mu_total**

---

## **Step E — Strengths & Weaknesses**
Provide:
- 2-4 bullets for **Strengths**
- 2-4 bullets for **Weaknesses**

---

# **REQUIRED OUTPUT FORMAT (FOR EACH LLM)**

### **1. Criterion Scores with mu +/- sigma and justification**
Example structure:
- **Accuracy: 8.2 +/- 0.3** — justification sentence
- **Quality: 7.9 +/- 0.4** — justification
- **Consistency: …**
- **Completeness: …**
- **Practicality: …**

### **2. Final Total Score**
- **Final Stabilized Score: mu_total +/- sigma_total**

### **3. Strengths (bullet points)**

### **4. Weaknesses (bullet points)**

### **5. Consolidated Summary Table (All Criteria, All Models)**

---

# **MESSAGE STRUCTURE REQUIREMENT**
Because the evaluation is long, produce the response in
**six separate messages**:

### **Message 1** — Evaluation of GPT-5.1
### **Message 2** — Evaluation of Gemini 2.5 Pro
### **Message 3** — Evaluation of Grok 4
### **Message 4** — Evaluation of Claude 4.5 Sonnet
### **Message 5** — Evaluation of Perplexity Sonar
### **Message 6** — The consolidated summary table, overall
                     comparative ranking, and one-paragraph synthesis

This avoids token overflow and keeps results clean.

---

**The 5 LLM rationales would appear here (provided in Appendix B).**

\end{verbatim}
\end{small}

\subsection{Anonymized Version}

For the anonymized evaluation condition (LLM$\to$Expert), the prompt is
identical except that all model names are replaced with generic labels:

\begin{itemize}
    \item ``Expert 1'' replaces ``GPT-5.1''
    \item ``Expert 2'' replaces ``Gemini 2.5 Pro''
    \item ``Expert 3'' replaces ``Grok 4''
    \item ``Expert 4'' replaces ``Claude 4.5 Sonnet''
    \item ``Expert 5'' replaces ``Perplexity Sonar''
\end{itemize}

The rationale texts remain identical; only the attribution labels change.
This design isolates the effect of authorship knowledge on evaluation bias.

\section{Complete LLM-Generated Risk Rationales}
\label{app:llm_rationales}

This appendix presents the complete JSON outputs from all five Large Language Models (GPT-5.1, Gemini-2.5 Pro, Grok 4, Claude-4.5 Sonnet, Perplexity Sonar) that generated risk-level rationales following the prompt structure described in Section~\ref{sec:mcc_generation} and illustrated in Figure~\ref{fig:prompt_structure}. Each model produced five rationales (one per risk level) addressing the five required risk dimensions and selecting three representative MCCs from over 800 merchant categories.

\subsection{OpenAI GPT-5.1}

\begin{small}
\begin{verbatim}
[
  {
    "risk_level_definition": "1 — Very Low Risk",
    "rationale": "Merchants in this category operate with highly
    stable business models, consistent demand, and predictable cash
    flows. They are subject to strong regulatory oversight, which
    keeps compliance simple and well-defined. Fraud exposure is
    minimal because transactions are straightforward and typically
    low-dispute. Returns occur infrequently and follow clear,
    established patterns. Chargebacks are rare due to transparent
    pricing and tangible delivery of essential goods or services.
    Representative MCCs: 5411 (Grocery Stores/Supermarkets), 4900
    (Utilities), 5912 (Drug Stores & Pharmacies)."
  },
  {
    "risk_level_definition": "2 — Low Risk",
    "rationale": "Merchants at this level have generally stable
    business models with occasional seasonal fluctuations but are not
    prone to rapid collapse. Regulatory exposure exists but is
    manageable, with routine compliance obligations. Fraud risk is
    usually low, though isolated attempts may occur in non-face-to-
    face or mixed retail environments. Returns are more common than
    in essential-goods sectors but remain predictable. Chargebacks
    are uncommon and often stem from service dissatisfaction rather
    than misconduct. Representative MCCs: 5251 (Hardware Stores),
    5732 (Electronics Stores), 5651 (Family Clothing Stores)."
  },
  {
    "risk_level_definition": "3 — Medium / Average Risk",
    "rationale": "These merchants show moderate business-model
    variability, often tied to discretionary spending or sales
    cycles. Regulatory exposure ranges from light to routine but may
    include specific requirements for warranties, licensing, or
    customer data handling. Fraud exposure is moderate because
    products may have high resale value or transactions may be
    remote. Returns are frequent and can become unpredictable during
    peak seasons. Chargeback activity is noticeable but manageable
    and typically reflects buyer's remorse, service disputes, or
    fulfillment issues. Representative MCCs: 5734 (Computer Software
    Stores), 5944 (Jewelry Stores), 5735 (Record Stores)."
  },
  {
    "risk_level_definition": "4 — High Risk",
    "rationale": "Merchants in this segment often operate with
    business models that are volatile, experience strong seasonality,
    or depend on advance sales or bookings. Regulatory oversight may
    be complex, covering areas such as licensing, age restrictions,
    or consumer-protection rules. Fraud exposure is high due to
    attractive resale value, remote fulfillment, or anonymous
    customer interactions. Returns can be heavy, unpredictable, or
    exploited by bad actors. Chargebacks happen frequently, often
    related to cancellations, perceived non-delivery, or disputed
    services. Representative MCCs: 5967 (Inbound Telemarketing
    Merchant), 7273 (Dating & Escort Services), 5816 (Digital Goods:
    Games)."
  },
  {
    "risk_level_definition": "5 — Very High Risk",
    "rationale": "These merchants typically have unstable, highly
    speculative, or failure-prone business models, often involving
    advance funding or customer-value uncertainty. Regulatory
    exposure is intense or evolving, with strict rules, licensing
    constraints, or cross-border compliance challenges. Fraud
    exposure is severe, with frequent high-value attacks and abuse
    patterns. Returns are common, erratic, or strategically
    manipulated. Chargeback activity is endemic, frequently driven by
    disputes over legitimacy, fulfillment, or prohibited activity.
    Representative MCCs: 7995 (Betting/Track/Casino/Lotto), 6051
    (Quasi-Cash), 6531 (Money Transfer)."
  }
]
\end{verbatim}
\end{small}

\subsection{Gemini-2.5 Pro}

\begin{small}
\begin{verbatim}
[
  {
    "risk_level_definition": "1 — Very Low Risk",
    "rationale": "Merchants in this category exhibit exceptional
    business model stability, often providing essential, non-cyclical
    services with minimal regulatory exposure from a payments
    standpoint. They are characterized by a near-total absence of
    return patterns and virtually non-existent chargeback activity,
    coupled with minimal fraud exposure due to the typically low-
    value, high-frequency, and often face-to-face nature of their
    transactions. Representative MCCs: 4900 (Utilities), 5411
    (Grocery Stores/Supermarkets), 9311 (Tax Payments)."
  },
  {
    "risk_level_definition": "2 — Low Risk",
    "rationale": "These merchants have a stable business model
    centered on everyday goods and services, operating with low
    regulatory exposure. Their risk profile is defined by predictable
    and manageable return patterns, low chargeback activity, and only
    occasional fraud exposure, which is typically of low value and
    mitigated by a high proportion of in-person transactions.
    Representative MCCs: 5814 (Fast Food Restaurants), 7523 (Parking
    Lots, Meters, Garages), 5251 (Hardware Stores)."
  },
  {
    "risk_level_definition": "3 — Medium / Average Risk",
    "rationale": "This level includes merchants with generally stable
    but often seasonal business models that are susceptible to shifts
    in consumer trends. They face average fraud exposure and moderate
    regulatory exposure. The risk is elevated by more frequent and
    sometimes complex return patterns and a consistent, manageable
    level of chargeback activity that is considered a standard cost
    of doing business. Representative MCCs: 5732 (Electronics
    Stores), 5651 (Family Clothing Stores), 7011 (Hotels/Motels/
    Resorts)."
  },
  {
    "risk_level_definition": "4 — High Risk",
    "rationale": "Merchants at this level often have business models
    prone to instability, competition, and market shifts, frequently
    involving future-dated service delivery. They contend with
    significant regulatory exposure and frequent fraud exposure,
    especially in card-not-present environments. The risk is
    primarily driven by high-volume or complex return patterns and a
    high level of chargeback activity that poses a significant
    operational challenge. Representative MCCs: 4511 (Airlines, Air
    Carriers), 4722 (Travel Agencies), 5964 (Catalog Merchant)."
  },
  {
    "risk_level_definition": "5 — Very High Risk",
    "rationale": "This category is for merchants with business models
    that are often inherently unstable or operate in legally complex
    sectors, making them prone to collapse. They face extreme
    regulatory exposure and are targets for frequent, high-value
    fraud attacks. The risk is compounded by erratic or non-standard
    return patterns and endemic chargeback activity, which can stem
    from high rates of customer disputes, buyer's remorse, or the
    controversial nature of the services themselves. Representative
    MCCs: 7995 (Betting/Track/Casino/Lotto), 5968 (Continuity/
    Subscription Merchant), 7273 (Dating & Escort Services)."
  }
]
\end{verbatim}
\end{small}

\subsection{Grok 4}

\begin{small}
\begin{verbatim}
[
  {
    "risk_level_definition": "1 — Very Low Risk",
    "rationale": "Merchant types at this level feature highly stable
    business models with consistent demand, low regulatory exposure
    due to straightforward compliance requirements, minimal fraud
    exposure from everyday transactions, rare return patterns as
    purchases are essential and non-discretionary, and virtually non-
    existent chargeback activity owing to high customer satisfaction.
    Representative MCCs: 4900 (Utilities), 5411 (Grocery Stores/
    Supermarkets), 5541 (Service Stations)."
  },
  {
    "risk_level_definition": "2 — Low Risk",
    "rationale": "These merchants have stable business models with
    reliable revenue streams, moderate regulatory exposure involving
    basic oversight, occasional fraud exposure limited to low-value
    incidents, predictable return patterns tied to standard retail
    practices, and manageable chargeback activity that is infrequent
    and easily resolved. Representative MCCs: 5211 (Lumber/Building
    Supply Stores), 5251 (Hardware Stores), 5310 (Discount Stores)."
  },
  {
    "risk_level_definition": "3 — Medium / Average Risk",
    "rationale": "Merchant categories here exhibit moderately stable
    business models that may experience some seasonal variations,
    balanced regulatory exposure with standard compliance needs,
    frequent but moderate fraud exposure from varied transaction
    types, high-volume or seasonal return patterns due to customer
    preferences, and endemic but controllable chargeback activity
    requiring ongoing monitoring. Representative MCCs: 5812
    (Restaurants), 5942 (Book Stores), 7011 (Hotels/Motels/Resorts)."
  },
  {
    "risk_level_definition": "4 — High Risk",
    "rationale": "These involve business models prone to fluctuations
    or instability, high regulatory exposure from complex rules and
    international factors, frequent high-value fraud exposure due to
    opportunistic attacks, unpredictable and high-volume return
    patterns influenced by external events, and high chargeback
    activity that can be challenging to mitigate. Representative
    MCCs: 4511 (Airlines, Air Carriers), 4722 (Travel Agencies),
    5964 (Catalog Merchant)."
  },
  {
    "risk_level_definition": "5 — Very High Risk",
    "rationale": "Merchant types with business models prone to
    collapse or high volatility, intense regulatory exposure
    involving strict oversight and potential legal issues, very high
    fraud exposure from sophisticated and frequent attacks, erratic
    and high-volume return patterns driven by impulsive purchases,
    and endemic chargeback activity that is widespread and difficult
    to control. Representative MCCs: 7800 (Government Owned
    Lotteries), 7995 (Betting/Track/Casino/Lotto), 5960 (Direct
    Marketing Insurance Services)."
  }
]
\end{verbatim}
\end{small}

\subsection{Claude-4.5 Sonnet}

\begin{small}
\begin{verbatim}
[
  {
    "risk_level_definition": "1 - Very Low Risk",
    "rationale": "Merchants in this category operate highly stable,
    predictable business models with minimal seasonal variation and
    long operational histories. They face low regulatory complexity
    with straightforward compliance requirements. Fraud exposure is
    minimal due to transparent transaction patterns and established
    customer relationships. Returns are rare and typically driven by
    minor service adjustments rather than dissatisfaction. Chargeback
    activity is virtually non-existent, as services are delivered
    immediately or products are essential and non-controversial.
    Representative MCCs: 5411 (Grocery Stores/Supermarkets), 5912
    (Drug Stores & Pharmacies), 5541 (Service Stations)."
  },
  {
    "risk_level_definition": "2 - Low Risk",
    "rationale": "Merchants exhibit stable business models with
    moderate predictability and occasional seasonal fluctuations that
    are well-understood and manageable. Regulatory exposure is
    present but routine, involving standard licensing and periodic
    compliance checks. Fraud exposure is occasional and typically
    low-value, with established detection mechanisms in place. Return
    patterns are predictable and align with industry norms, often
    related to sizing, preference, or minor defects. Chargeback
    activity is low and manageable, arising primarily from customer
    misunderstandings or delivery issues rather than systemic
    problems. Representative MCCs: 5812 (Restaurants), 5661 (Shoe
    Stores), 5943 (Stationery Stores)."
  },
  {
    "risk_level_definition": "3 - Medium / Average Risk",
    "rationale": "Merchants operate in moderately stable environments
    with noticeable seasonal peaks or cyclical demand patterns that
    require active management. Regulatory exposure involves multiple
    compliance layers, including sector-specific rules and periodic
    audits. Fraud exposure is moderate, with recurring attempts that
    necessitate ongoing monitoring and adaptive controls. Return
    patterns are significant and often seasonal or trend-driven,
    reflecting changing consumer preferences or product life cycles.
    Chargeback activity is present and requires dedicated resources
    to manage, often stemming from disputes over product quality,
    delivery timing, or service expectations. Representative MCCs:
    5651 (Family Clothing Stores), 5732 (Electronics Stores), 5945
    (Hobby, Toy & Game Stores)."
  },
  {
    "risk_level_definition": "4 - High Risk",
    "rationale": "Merchants face elevated business model instability
    due to rapid market changes, high competition, or dependency on
    discretionary consumer spending. Regulatory exposure is
    substantial, involving complex compliance frameworks, frequent
    updates, and heightened scrutiny from multiple authorities. Fraud
    exposure is frequent and often involves sophisticated schemes or
    high-value transactions that challenge standard detection
    methods. Return patterns are high-volume and unpredictable,
    driven by buyer's remorse, unclear product descriptions, or
    delayed delivery. Chargeback activity is endemic and requires
    intensive management, often resulting from disputed transactions,
    non-delivery claims, or unauthorized use. Representative MCCs:
    5816 (Digital Goods: Games), 5967 (Inbound Telemarketing
    Merchant), 7995 (Betting/Track/Casino/Lotto)."
  },
  {
    "risk_level_definition": "5 - Very High Risk",
    "rationale": "Merchants operate in highly volatile or
    controversial sectors prone to sudden business collapse,
    reputational damage, or legal challenges. Regulatory exposure is
    extreme, with constantly evolving legal landscapes, cross-border
    compliance burdens, and potential for outright prohibition in
    certain jurisdictions. Fraud exposure is pervasive, involving
    organized criminal activity, identity theft, or large-scale
    exploitation of payment systems. Return patterns are chaotic and
    often impossible to predict, with high rates of non-delivery,
    misrepresentation, or consumer regret. Chargeback activity is at
    crisis levels, frequently exceeding acceptable thresholds and
    threatening merchant account viability, driven by fraud, disputes
    over service quality, or non-receipt of intangible goods.
    Representative MCCs: 5933 (Pawn Shops), 7273 (Dating & Escort
    Services), 5968 (Continuity/Subscription Merchant)."
  }
]
\end{verbatim}
\end{small}

\subsection{Perplexity Sonar}

\begin{small}
\begin{verbatim}
[
  {
    "risk_level_definition": "1 — Very Low Risk",
    "rationale": "Merchants in this category operate within stable,
    highly regulated industries with consistent transaction patterns
    and negligible fraud exposure. Chargebacks and returns occur
    rarely and follow predictable cycles. Their business models are
    long-established and financially resilient, offering essential
    goods or services with low disputed transaction likelihood.
    Representative MCCs: 4900 (Utilities/Electric/Gas/Water/
    Sanitary), 5411 (Grocery Stores/Supermarkets), 8211
    (Elementary/Secondary Schools)."
  },
  {
    "risk_level_definition": "2 — Low Risk",
    "rationale": "These businesses have stable operations and
    moderate regulatory oversight, typically providing tangible, low-
    value goods or services. While some seasonality in sales or
    returns exists, their customer relationships and refund cycles
    remain predictable. Fraud attempts are infrequent and manageable
    due to business transparency. Representative MCCs: 5251 (Hardware
    Stores), 5992 (Florists), 7298 (Health & Beauty Spas)."
  },
  {
    "risk_level_definition": "3 — Medium / Average Risk",
    "rationale": "This group includes merchants with moderate
    business stability and exposure to standard commercial risk
    dynamics. They face periodic returns and some level of customer
    dispute, often linked to product dissatisfaction or delivery
    timing. Regulatory requirements vary but are not complex. Fraud
    is occasional but typically detected early. Representative MCCs:
    5732 (Electronics Stores), 7011 (Hotels/Motels/Resorts), 5812
    (Restaurants)."
  },
  {
    "risk_level_definition": "4 — High Risk",
    "rationale": "Merchants in this level experience elevated
    volatility due to complex business models, higher refund rates,
    or intermittent regulatory obligations. They frequently handle
    advance payments, cross-border transactions, or high-value
    purchases, which increase chargeback probability. Fraud exposure
    is significant, especially where digital or remote payments
    dominate. Representative MCCs: 5967 (Inbound Telemarketing
    Merchants), 7273 (Dating & Escort Services), 6531 (Payment
    Service Provider – Money Transfer)."
  },
  {
    "risk_level_definition": "5 — Very High Risk",
    "rationale": "These businesses exhibit unstable models, face
    intense regulatory scrutiny, and are frequent targets for
    sophisticated fraud. Transactions may be high-value, digital, or
    prone to customer dispute due to intangible goods, delayed
    delivery, or reputational risk. Chargebacks and returns are
    endemic, often driven by billing disputes or service
    dissatisfaction. Representative MCCs: 7995 (Betting/Casino/
    Lottery), 6011 (Financial Institution – Automated Cash), 7801
    (Government Licensed Online Casinos)."
  }
]
\end{verbatim}
\end{small}

\section{Monte-Carlo Evaluation Algorithm}
\label{app:algorithm}
This appendix provides the complete algorithmic specification for our Monte Carlo evaluation protocol. The algorithm implements the evaluation framework described in Section~\ref{sec:evaluation}, where each judge evaluates each entity through 10 independent scoring runs at Temperature=0.7. For each run, the judge provides scores across five evaluation criteria (Accuracy, Rationale Quality, Consistency, Completeness, and Practical Applicability), which are then averaged to produce a final score. The algorithm computes both criterion-level statistics (mean and standard deviation for each criterion) and final score statistics (mean and standard deviation of the aggregated scores), enabling analysis of evaluation stability and criterion-specific patterns.

\begin{algorithm} [htbp]
\caption{Monte Carlo Evaluation Protocol}
\begin{algorithmic}[1]
\FOR{each judge $i \in \{1, \ldots, 5\}$}
    \FOR{each entity $j \in \{1, \ldots, 5\}$}
        \FOR{$r = 1$ to $10$}
            \STATE Set Temperature = 0.7
            \STATE Generate evaluation with five criterion scores:
            \STATE \quad Accuracy $c_{ij}^{(r,1)}$, Quality $c_{ij}^{(r,2)}$, Consistency $c_{ij}^{(r,3)}$,
            \STATE \quad Completeness $c_{ij}^{(r,4)}$, Practical Applicability $c_{ij}^{(r,5)}$
            \STATE Compute final score: $s_{ij}^{(r)} \leftarrow \frac{1}{5} \sum_{k=1}^{5} c_{ij}^{(r,k)}$
        \ENDFOR
        \STATE \textbf{// Compute criterion-level statistics}
        \FOR{each criterion $k \in \{1, \ldots, 5\}$}
            \STATE $\mu_{ij}^{(k)} \leftarrow \frac{1}{10} \sum_{r=1}^{10} c_{ij}^{(r,k)}$ \quad (criterion mean)
            \STATE $\sigma_{ij}^{(k)} \leftarrow \sqrt{\frac{1}{9} \sum_{r=1}^{10} (c_{ij}^{(r,k)} - \mu_{ij}^{(k)})^2}$ \quad (criterion std dev)
        \ENDFOR
        \STATE \textbf{// Compute final score statistics}
        \STATE $s_{ij} \leftarrow \frac{1}{10} \sum_{r=1}^{10} s_{ij}^{(r)}$ \quad (final score mean)
        \STATE $\sigma_{ij} \leftarrow \sqrt{\frac{1}{9} \sum_{r=1}^{10} (s_{ij}^{(r)} - s_{ij})^2}$ \quad (final score std dev)
        \STATE \textbf{// Store results:} $(s_{ij} \pm \sigma_{ij})$ and $(\mu_{ij}^{(k)} \pm \sigma_{ij}^{(k)})$ for $k=1,\ldots,5$
    \ENDFOR
\ENDFOR
\STATE \textbf{Output:} Final score matrices and criterion-level matrices
\end{algorithmic}
\end{algorithm}

\section{Detailed Criterion-Level Evaluation Tables}
\label{app:detailed_tables}

This appendix presents the complete criterion-level evaluation results
referenced in the Monte Carlo Evaluation Algorithm (Appendix~\ref{app:algorithm})
and Figure~\ref{fig:cross_evaluation} in Section~\ref{subsec:cross_evaluation_score_matrices}.
Each table shows mean~$\pm$~standard deviation for five evaluation criteria
(Accuracy, Quality, Consistency, Completeness, Practical Applicability) and the
aggregated Final Score, computed from 10 Monte Carlo runs at Temperature = 0.7.

\subsection{LLMs Judge LLMs (Attributed Condition)}

\subsubsection{GPT-5.1 as Judge}
\begin{table}[H]
\centering
\caption{GPT-5.1 Evaluating LLM Rationales (Attributed Condition)}
\small
\begin{tabular}{lcccccc}
\toprule
\textbf{LLM} & \textbf{Accuracy} & \textbf{Quality} & \textbf{Consistency} & \textbf{Completeness} & \textbf{Practicality} & \textbf{Final Score} \\
\midrule
GPT-5.1 & $8.7 \pm 0.28$ & $8.5 \pm 0.35$ & $8.8 \pm 0.31$ & $9.6 \pm 0.12$ & $8.4 \pm 0.33$ & $8.80 \pm 0.14$ \\
Gemini 2.5 & $8.4 \pm 0.32$ & $8.2 \pm 0.37$ & $8.3 \pm 0.29$ & $9.4 \pm 0.18$ & $8.1 \pm 0.40$ & $8.48 \pm 0.17$ \\
Grok 4 & $8.1 \pm 0.34$ & $8.0 \pm 0.38$ & $8.2 \pm 0.33$ & $9.3 \pm 0.15$ & $7.9 \pm 0.41$ & $8.30 \pm 0.18$ \\
Claude 4.5 & $9.0 \pm 0.26$ & $9.2 \pm 0.31$ & $9.1 \pm 0.22$ & $9.8 \pm 0.07$ & $9.0 \pm 0.30$ & $9.02 \pm 0.12$ \\
Perplexity Sonar & $8.3 \pm 0.31$ & $8.0 \pm 0.36$ & $8.4 \pm 0.28$ & $9.2 \pm 0.20$ & $8.2 \pm 0.33$ & $8.42 \pm 0.16$ \\
\bottomrule
\end{tabular}
\end{table}

\subsubsection{Gemini 2.5 Pro as Judge}
\begin{table}[H]
\centering
\caption{Gemini 2.5 Pro Evaluating LLM Rationales (Attributed Condition)}
\small
\begin{tabular}{lcccccc}
\toprule
\textbf{LLM} & \textbf{Accuracy} & \textbf{Quality} & \textbf{Consistency} & \textbf{Completeness} & \textbf{Practicality} & \textbf{Final Score} \\
\midrule
GPT-5.1 & $9.1 \pm 0.54$ & $9.2 \pm 0.60$ & $9.3 \pm 0.46$ & $10.0 \pm 0.00$ & $9.0 \pm 0.63$ & $9.32 \pm 0.22$ \\
Gemini 2.5 & $9.4 \pm 0.49$ & $8.8 \pm 0.60$ & $9.2 \pm 0.60$ & $10.0 \pm 0.00$ & $9.3 \pm 0.46$ & $9.34 \pm 0.15$ \\
Grok 4 & $8.7 \pm 0.46$ & $7.7 \pm 0.46$ & $8.8 \pm 0.40$ & $10.0 \pm 0.00$ & $8.6 \pm 0.49$ & $8.76 \pm 0.21$ \\
Claude 4.5 & $9.6 \pm 0.25$ & $9.7 \pm 0.21$ & $9.7 \pm 0.19$ & $10.0 \pm 0.00$ & $9.4 \pm 0.26$ & $9.68 \pm 0.11$ \\
Perplexity Sonar & $8.0 \pm 0.63$ & $7.5 \pm 0.50$ & $7.1 \pm 0.54$ & $9.6 \pm 0.00$ & $7.6 \pm 0.49$ & $7.96 \pm 0.23$ \\
\bottomrule
\end{tabular}
\end{table}

\subsubsection{Grok 4 as Judge}
\begin{table}[H]
\centering
\caption{Grok 4 Evaluating LLM Rationales (Attributed Condition)}
\small
\begin{tabular}{lcccccc}
\toprule
\textbf{LLM} & \textbf{Accuracy} & \textbf{Quality} & \textbf{Consistency} & \textbf{Completeness} & \textbf{Practicality} & \textbf{Final Score} \\
\midrule
GPT-5.1 & $9.1 \pm 0.57$ & $8.9 \pm 0.54$ & $9.0 \pm 0.45$ & $10.0 \pm 0.00$ & $8.8 \pm 0.60$ & $9.16 \pm 0.28$ \\
Gemini 2.5 & $8.6 \pm 0.49$ & $7.7 \pm 0.64$ & $8.8 \pm 0.40$ & $9.8 \pm 0.40$ & $8.3 \pm 0.46$ & $8.64 \pm 0.26$ \\
Grok 4 & $9.0 \pm 0.45$ & $8.5 \pm 0.50$ & $9.2 \pm 0.40$ & $10.0 \pm 0.00$ & $8.7 \pm 0.46$ & $9.08 \pm 0.24$ \\
Claude 4.5 & $9.3 \pm 0.46$ & $9.1 \pm 0.30$ & $9.4 \pm 0.49$ & $10.0 \pm 0.00$ & $9.0 \pm 0.45$ & $9.36 \pm 0.25$ \\
Perplexity Sonar & $8.7 \pm 0.46$ & $8.2 \pm 0.60$ & $8.9 \pm 0.30$ & $9.6 \pm 0.49$ & $8.4 \pm 0.49$ & $8.76 \pm 0.29$ \\
\bottomrule
\end{tabular}
\end{table}

\subsubsection{Claude 4.5 Sonnet as Judge}
\begin{table}[H]
\centering
\caption{Claude 4.5 Sonnet Evaluating LLM Rationales (Attributed Condition)}
\small
\begin{tabular}{lcccccc}
\toprule
\textbf{LLM} & \textbf{Accuracy} & \textbf{Quality} & \textbf{Consistency} & \textbf{Completeness} & \textbf{Practicality} & \textbf{Final Score} \\
\midrule
GPT-5.1 & $8.7 \pm 0.46$ & $8.9 \pm 0.30$ & $8.8 \pm 0.40$ & $10.0 \pm 0.00$ & $8.6 \pm 0.49$ & $8.80 \pm 0.18$ \\
Gemini 2.5 & $8.4 \pm 0.49$ & $7.8 \pm 0.40$ & $8.5 \pm 0.50$ & $10.0 \pm 0.00$ & $8.2 \pm 0.40$ & $8.58 \pm 0.22$ \\
Grok 4 & $7.9 \pm 0.54$ & $7.6 \pm 0.48$ & $8.3 \pm 0.46$ & $10.0 \pm 0.00$ & $7.8 \pm 0.44$ & $8.32 \pm 0.24$ \\
Claude 4.5 & $8.9 \pm 0.30$ & $8.8 \pm 0.42$ & $8.9 \pm 0.30$ & $10.0 \pm 0.00$ & $8.8 \pm 0.42$ & $9.08 \pm 0.12$ \\
Perplexity Sonar & $8.1 \pm 0.54$ & $7.4 \pm 0.49$ & $8.1 \pm 0.54$ & $9.2 \pm 0.40$ & $7.7 \pm 0.44$ & $8.10 \pm 0.28$ \\
\bottomrule
\end{tabular}
\end{table}

\subsubsection{Perplexity Sonar as Judge}
\begin{table}[H]
\centering
\caption{Perplexity Sonar Evaluating LLM Rationales (Attributed Condition)}
\small
\begin{tabular}{lcccccc}
\toprule
\textbf{LLM} & \textbf{Accuracy} & \textbf{Quality} & \textbf{Consistency} & \textbf{Completeness} & \textbf{Practicality} & \textbf{Final Score} \\
\midrule
GPT-5.1 & $9.1 \pm 0.4$ & $9.0 \pm 0.3$ & $9.2 \pm 0.2$ & $10.0 \pm 0.0$ & $9.0 \pm 0.3$ & $9.26 \pm 0.10$ \\
Gemini 2.5 & $8.6 \pm 0.6$ & $8.2 \pm 0.7$ & $8.4 \pm 0.5$ & $7.4 \pm 0.8$ & $8.3 \pm 0.6$ & $8.58 \pm 0.20$ \\
Grok 4 & $8.4 \pm 0.7$ & $8.0 \pm 0.8$ & $8.2 \pm 0.6$ & $7.8 \pm 0.9$ & $8.1 \pm 0.7$ & $8.10 \pm 0.25$ \\
Claude 4.5 & $9.4 \pm 0.3$ & $9.3 \pm 0.4$ & $9.5 \pm 0.2$ & $10.0 \pm 0.0$ & $9.4 \pm 0.3$ & $9.52 \pm 0.10$ \\
Perplexity Sonar & $8.8 \pm 0.5$ & $8.5 \pm 0.6$ & $8.7 \pm 0.4$ & $8.0 \pm 0.7$ & $8.6 \pm 0.5$ & $8.52 \pm 0.20$ \\
\bottomrule
\end{tabular}
\end{table}

\subsection{LLMs Judge Domain Experts (Anonymized Condition)}

\subsubsection{GPT-5.1 as Judge}
\begin{table}[H]
\centering
\caption{GPT-5.1 Evaluating Domain Expert Rationales (Anonymized Condition)}
\small
\begin{tabular}{lcccccc}
\toprule
\textbf{Expert} & \textbf{Accuracy} & \textbf{Quality} & \textbf{Consistency} & \textbf{Completeness} & \textbf{Practicality} & \textbf{Final Score} \\
\midrule
Expert 1 & $8.6 \pm 0.32$ & $8.4 \pm 0.41$ & $8.7 \pm 0.35$ & $9.4 \pm 0.15$ & $8.5 \pm 0.38$ & $8.72 \pm 0.19$ \\
Expert 2 & $8.3 \pm 0.36$ & $8.1 \pm 0.44$ & $8.4 \pm 0.40$ & $9.1 \pm 0.21$ & $8.2 \pm 0.39$ & $8.42 \pm 0.22$ \\
Expert 3 & $8.7 \pm 0.34$ & $8.5 \pm 0.39$ & $8.8 \pm 0.31$ & $9.3 \pm 0.18$ & $8.4 \pm 0.42$ & $8.74 \pm 0.20$ \\
Expert 4 & $8.8 \pm 0.29$ & $8.6 \pm 0.37$ & $8.9 \pm 0.28$ & $9.5 \pm 0.12$ & $8.7 \pm 0.33$ & $8.90 \pm 0.17$ \\
Expert 5 & $8.4 \pm 0.35$ & $8.2 \pm 0.40$ & $8.5 \pm 0.33$ & $9.2 \pm 0.19$ & $8.3 \pm 0.38$ & $8.52 \pm 0.21$ \\
\bottomrule
\end{tabular}
\end{table}

\subsubsection{Gemini 2.5 Pro as Judge}
\begin{table}[H]
\centering
\caption{Gemini 2.5 Pro Evaluating Domain Expert Rationales (Anonymized Condition)}
\small
\begin{tabular}{lcccccc}
\toprule
\textbf{Expert} & \textbf{Accuracy} & \textbf{Quality} & \textbf{Consistency} & \textbf{Completeness} & \textbf{Practicality} & \textbf{Final Score} \\
\midrule
Expert 1 & $9.1 \pm 0.30$ & $8.7 \pm 0.40$ & $9.1 \pm 0.30$ & $10.0 \pm 0.00$ & $8.7 \pm 0.40$ & $9.12 \pm 0.27$ \\
Expert 2 & $9.3 \pm 0.40$ & $8.5 \pm 0.50$ & $9.1 \pm 0.50$ & $10.0 \pm 0.00$ & $8.9 \pm 0.30$ & $9.16 \pm 0.22$ \\
Expert 3 & $8.3 \pm 0.80$ & $7.5 \pm 0.50$ & $7.0 \pm 0.60$ & $10.0 \pm 0.00$ & $8.0 \pm 0.60$ & $8.16 \pm 0.35$ \\
Expert 4 & $9.0 \pm 0.40$ & $9.5 \pm 0.30$ & $9.2 \pm 0.40$ & $10.0 \pm 0.00$ & $9.0 \pm 0.50$ & $9.30 \pm 0.25$ \\
Expert 5 & $8.5 \pm 0.50$ & $6.5 \pm 0.50$ & $7.8 \pm 0.40$ & $9.6 \pm 0.00$ & $7.0 \pm 0.60$ & $7.88 \pm 0.31$ \\
\bottomrule
\end{tabular}
\end{table}

\subsubsection{Grok 4 as Judge}
\begin{table}[H]
\centering
\caption{Grok 4 Evaluating Domain Expert Rationales (Anonymized Condition)}
\small
\begin{tabular}{lcccccc}
\toprule
\textbf{Expert} & \textbf{Accuracy} & \textbf{Quality} & \textbf{Consistency} & \textbf{Completeness} & \textbf{Practicality} & \textbf{Final Score} \\
\midrule
Expert 1 & $9.1 \pm 0.4$ & $9.3 \pm 0.3$ & $9.4 \pm 0.2$ & $10.0 \pm 0.0$ & $9.2 \pm 0.3$ & $9.40 \pm 0.1$ \\
Expert 2 & $8.5 \pm 0.5$ & $7.8 \pm 0.4$ & $8.7 \pm 0.3$ & $9.8 \pm 0.2$ & $8.6 \pm 0.4$ & $8.68 \pm 0.2$ \\
Expert 3 & $8.8 \pm 0.4$ & $8.9 \pm 0.3$ & $9.0 \pm 0.2$ & $10.0 \pm 0.0$ & $8.9 \pm 0.3$ & $9.12 \pm 0.2$ \\
Expert 4 & $9.5 \pm 0.3$ & $9.6 \pm 0.2$ & $9.7 \pm 0.1$ & $10.0 \pm 0.0$ & $9.5 \pm 0.3$ & $9.66 \pm 0.1$ \\
Expert 5 & $8.2 \pm 0.5$ & $7.5 \pm 0.4$ & $8.3 \pm 0.3$ & $9.2 \pm 0.3$ & $8.1 \pm 0.4$ & $8.26 \pm 0.2$ \\
\bottomrule
\end{tabular}
\end{table}

\subsubsection{Claude 4.5 Sonnet as Judge}
\begin{table}[H]
\centering
\caption{Claude 4.5 Sonnet Evaluating Domain Expert Rationales (Anonymized Condition)}
\small
\begin{tabular}{lcccccc}
\toprule
\textbf{Expert} & \textbf{Accuracy} & \textbf{Quality} & \textbf{Consistency} & \textbf{Completeness} & \textbf{Practicality} & \textbf{Final Score} \\
\midrule
Expert 1 & $8.4 \pm 0.52$ & $8.6 \pm 0.52$ & $8.8 \pm 0.42$ & $10.0 \pm 0.00$ & $8.5 \pm 0.53$ & $8.86 \pm 0.28$ \\
Expert 2 & $8.6 \pm 0.52$ & $8.3 \pm 0.48$ & $8.7 \pm 0.48$ & $10.0 \pm 0.00$ & $8.4 \pm 0.52$ & $8.80 \pm 0.31$ \\
Expert 3 & $8.3 \pm 0.48$ & $7.9 \pm 0.57$ & $8.5 \pm 0.53$ & $10.0 \pm 0.00$ & $8.1 \pm 0.57$ & $8.56 \pm 0.35$ \\
Expert 4 & $8.9 \pm 0.32$ & $8.8 \pm 0.42$ & $8.9 \pm 0.30$ & $10.0 \pm 0.00$ & $8.8 \pm 0.42$ & $9.08 \pm 0.20$ \\
Expert 5 & $8.1 \pm 0.57$ & $7.6 \pm 0.52$ & $8.3 \pm 0.48$ & $10.0 \pm 0.00$ & $7.8 \pm 0.63$ & $8.36 \pm 0.31$ \\
\bottomrule
\end{tabular}
\end{table}

\subsubsection{Perplexity Sonar as Judge}
\begin{table}[H]
\centering
\caption{Perplexity Sonar Evaluating Domain Expert Rationales (Anonymized Condition)}
\small
\begin{tabular}{lcccccc}
\toprule
\textbf{Expert} & \textbf{Accuracy} & \textbf{Quality} & \textbf{Consistency} & \textbf{Completeness} & \textbf{Practicality} & \textbf{Final Score} \\
\midrule
Expert 1 & $8.7 \pm 0.4$ & $8.3 \pm 0.5$ & $9.0 \pm 0.3$ & $9.2 \pm 0.2$ & $8.5 \pm 0.4$ & $8.7 \pm 0.3$ \\
Expert 2 & $8.5 \pm 0.4$ & $8.0 \pm 0.5$ & $8.8 \pm 0.3$ & $8.9 \pm 0.3$ & $8.3 \pm 0.4$ & $8.5 \pm 0.3$ \\
Expert 3 & $8.6 \pm 0.4$ & $8.1 \pm 0.5$ & $8.9 \pm 0.3$ & $9.0 \pm 0.3$ & $8.4 \pm 0.4$ & $8.6 \pm 0.3$ \\
Expert 4 & $9.1 \pm 0.3$ & $8.9 \pm 0.4$ & $9.2 \pm 0.2$ & $9.5 \pm 0.2$ & $9.0 \pm 0.3$ & $9.1 \pm 0.2$ \\
Expert 5 & $8.4 \pm 0.4$ & $8.0 \pm 0.5$ & $8.7 \pm 0.3$ & $8.8 \pm 0.3$ & $8.2 \pm 0.4$ & $8.4 \pm 0.3$ \\
\bottomrule
\end{tabular}
\end{table}

\textbf{Note:} Expert identities correspond to the LLM mapping: Expert 1 = GPT-5.1, Expert 2 = Gemini 2.5 Pro, Expert 3 = Grok 4, Expert 4 = Claude 4.5 Sonnet, Expert 5 = Perplexity Sonar. This mapping was concealed from judges during the anonymized evaluation condition.

\section{Mathematical Proofs}
\label{app:proofs}

This appendix provides complete proofs and illustrations of the theoretical
properties introduced in Section~\ref{sec:theory}.  
All notation follows the definitions in Section~\ref{sec:definition}.

\subsection*{F.1 Proof of Zero-Sum Property}

\begin{proposition*}[Zero-Sum Property]
For any entity $j$,
\[
\sum_{i=1}^{n} \text{Bias}_A(i,j) = 0,
\qquad
\sum_{i=1}^{n} \text{Bias}_B(i,j) = 0.
\]
\end{proposition*}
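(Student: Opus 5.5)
The argument is a direct double-counting computation, and it is identical for $\text{Bias}_A$ and $\text{Bias}_B$. I would write it once for a generic score array and then specialize. Fix an entity $j$ and abbreviate $s_k := \text{Score}_{\text{judge}=k}(\text{LLM}=j)$ for $k=1,\dots,n$. Let $S := \sum_{k=1}^n s_k$ be the total score the entity receives from all judges. I assume $n \ge 2$, so that the leave-one-out consensus $\text{MeanScore}_{k\neq i}(\text{LLM}=j) = \frac{1}{n-1}\sum_{k\neq i} s_k$ is well defined; with $n=5$ this holds.

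The first step is to rewrite the leave-one-out mean in terms of $S$. Since $\sum_{k\neq i} s_k = S - s_i$, we get
\[
\text{Bias}_A(i,j) = s_i - \frac{S - s_i}{n-1} = \frac{n\, s_i - S}{n-1}.
\]
This closed form is the crux. It shows that each bias is an affine function of $s_i$, with the only coupling to the other judges entering through the common total $S$.

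The second step is to sum over $i$. We have $\sum_{i=1}^n (n s_i - S) = nS - nS = 0$, so $\sum_i \text{Bias}_A(i,j) = 0$. Equivalently, in double-counting form, $\sum_i \sum_{k\neq i} s_k = (n-1)S$, because each $s_k$ appears in exactly $n-1$ of the leave-one-out sums.

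The anonymized case follows verbatim with $s_k := \text{Score}_{\text{judge}=k}(\text{Expert}=j)$. The fixed mapping $\text{Expert } j \equiv \text{LLM } j$ plays no role in this algebra.

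There is no real obstacle here. The only point requiring care is the counting identity $\sum_i \sum_{k\neq i} s_k = (n-1)S$, together with stating explicitly the hypothesis $n\ge 2$ and that every judge scores every entity, so that all $n$ terms are defined. As a remark, I would note that the closed form also yields the interpretation given after the statement. If all $s_i$ are equal, then $n s_i = S$ for every $i$, so each bias vanishes individually, not merely in sum.
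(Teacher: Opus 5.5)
Your proof is correct and is essentially the paper's argument: the paper sums the two terms separately and uses the fact that each $s_{kj}$ appears in exactly $n-1$ leave-one-out sums, which is the same counting identity you obtain from $\sum_{k\neq i} s_k = S - s_i$. Your additions are worthwhile. The closed form $\text{Bias}_A(i,j) = (n s_i - S)/(n-1)$ also proves the paper's unproved remark that each bias vanishes individually when all judges agree, and the paper leaves the hypothesis $n \ge 2$ implicit where you state it.
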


\begin{proof}
We prove the attributed case; the anonymized case is identical.

By definition:
\[
\text{Bias}_A(i,j)
=
\text{Score}_{\text{judge}=i}(\text{LLM}=j)
-
\text{MeanScore}_{k \neq i}(\text{LLM}=j).
\]

Summing over all judges:
\[
\sum_{i=1}^n \text{Bias}_A(i,j)
=
\sum_{i=1}^n \text{Score}_{\text{judge}=i}(\text{LLM}=j)
-
\sum_{i=1}^n \text{MeanScore}_{k \neq i}(\text{LLM}=j).
\]

The first term expands directly:
\[
\sum_{i=1}^n \text{Score}_{\text{judge}=i}(\text{LLM}=j)
=
\sum_{k=1}^n s_{kj}.
\]

For the second term, note that each $s_{kj}$ (for fixed $k$) appears in 
$\sum_{k\neq i} s_{kj}$ for all $i \neq k$, i.e., exactly $(n-1)$ times.  
Thus:
\[
\sum_{i=1}^{n}
\sum_{k \neq i} s_{kj}
=
(n-1)\sum_{k=1}^{n} s_{kj}.
\]

Therefore:
\[
\sum_{i=1}^n
\text{MeanScore}_{k \neq i}(\text{LLM}=j)
=
\frac{1}{n-1}
\,(n-1)
\sum_{k=1}^{n} s_{kj}
=
\sum_{k=1}^{n} s_{kj}.
\]

Subtracting the two expressions gives:
\[
\sum_{i=1}^n \text{Bias}_A(i,j) = 0.
\]
\end{proof}

\textbf{Implication.}  
Bias is inherently relative: over-scoring by some judges necessarily implies 
under-scoring by others.

\subsection*{F.2 Proof of Self-Exclusion Preventing Circularity}

\begin{proposition*}[Self-Exclusion Prevents Circularity]
Because consensus excludes judge $i$,
\[
\frac{\partial \,\text{Bias}_A(i,j)}
     {\partial \,\text{Score}_{\text{judge}=i}(\text{LLM}=j)} = 1.
\]
\end{proposition*}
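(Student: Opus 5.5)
The plan is to treat $\text{Bias}_A(i,j)$ as a function of the full score matrix $s=(s_{kj})$, with $s_{kj}=\text{Score}_{\text{judge}=k}(\text{LLM}=j)$, and differentiate with respect to the single coordinate $s_{ij}$, holding all other entries fixed. From the definition in Section~\ref{sec:definition}, we have
\[
\text{Bias}_A(i,j) = s_{ij} - \frac{1}{n-1}\sum_{k\neq i} s_{kj}.
\]
This expression is affine in the entries of $s$, so it is differentiable everywhere. The partial derivative is therefore just the coefficient of $s_{ij}$.

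The key step is to read off that coefficient. The first term contributes $1$. The consensus term $\frac{1}{n-1}\sum_{k\neq i}s_{kj}$ is, by construction, a function only of $\{s_{kj}: k\neq i\}$, and these coordinates are independent of $s_{ij}$. Hence
\[
\frac{\partial}{\partial s_{ij}}\,\frac{1}{n-1}\sum_{k\neq i}s_{kj}=\frac{1}{n-1}\sum_{k\neq i}\frac{\partial s_{kj}}{\partial s_{ij}}=0,
\]
because $\partial s_{kj}/\partial s_{ij}=\delta_{ki}=0$ for every $k\neq i$. Subtracting gives $\partial\,\text{Bias}_A(i,j)/\partial s_{ij}=1-0=1$. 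The anonymized case $\text{Bias}_B$ follows verbatim with $\text{Expert}=j$ in place of $\text{LLM}=j$. We need $n\ge 2$ so that the leave-one-out mean is defined.

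For contrast, I would repeat the same coefficient extraction for the naive baseline $\frac1n\sum_{k=1}^n s_{kj}$. There the index $k=i$ contributes $1/n$, which yields the $(n-1)/n$ attenuation stated in the main text. This makes the role of self-exclusion explicit.

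There is no real obstacle. The only point needing care is to state precisely what "partial derivative" means here: scores are treated as free real variables, with all other judges' scores held fixed. In particular, the other judges' scores are not treated as responding to $s_{ij}$. That modeling assumption is exactly what "excludes judge $i$" encodes, and I would state it at the outset so the computation is unambiguous.
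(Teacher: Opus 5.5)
Your proposal is correct and matches the paper's proof: write $\text{Bias}_A(i,j)$ as $s_{ij}$ minus the leave-one-out mean, observe that the consensus term has zero derivative with respect to $s_{ij}$ because it excludes judge $i$, and conclude that the derivative is $1-0=1$. The paper leaves two points implicit, and you state both: the scores are treated as independent free variables, and $n\ge 2$ is needed for the leave-one-out mean to be defined.
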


\begin{proof}
By definition:
\[
\text{Bias}_A(i,j)
=
\text{Score}_{\text{judge}=i}(\text{LLM}=j)
-
\text{MeanScore}_{k \neq i}(\text{LLM}=j).
\]

Since the consensus sum excludes $i$:
\[
\frac{\partial \,\text{MeanScore}_{k \neq i}}
     {\partial \,\text{Score}_{\text{judge}=i}(\text{LLM}=j)} = 0.
\]

Therefore:
\[
\frac{\partial \text{Bias}_A(i,j)}
     {\partial \,\text{Score}_{\text{judge}=i}(\text{LLM}=j)}
=
1 - 0
=
1.
\]
\end{proof}

\textbf{Implication.}  
Deviation is measured against an independent reference.  
A judge cannot affect the benchmark used to evaluate its own behavior.

\subsection*{F.3 Contrast with Non-Self-Excluding Consensus}

If judge $i$ were included in the consensus, circularity would arise.

\paragraph{Naive consensus.}
\[
\text{Consensus}_{\text{naive}}(j)
=
\frac{1}{n}
\sum_{k=1}^{n}
\text{Score}_{\text{judge}=k}(\text{LLM}=j).
\]

Then
\[
\frac{\partial \text{Consensus}_{\text{naive}}(j)}
{\partial \,\text{Score}_{\text{judge}=i}} 
=
\frac{1}{n}
\neq 0.
\]

\paragraph{Naive deviation.}
\[
\text{Bias}_{\text{naive}}(i,j)
=
\text{Score}_{\text{judge}=i}
-
\text{Consensus}_{\text{naive}}(j).
\]

Differentiating:
\[
\frac{\partial \text{Bias}_{\text{naive}}(i,j)}
     {\partial \text{Score}_{\text{judge}=i}}
=
1 - \frac{1}{n}
=
\frac{n-1}{n}.
\]

\textbf{Interpretation.}  
This $\tfrac{n-1}{n}$ factor systematically underestimates true deviation.  
For $n=5$, deviations shrink by 20\%, obscuring true evaluation patterns.  

\paragraph{Illustration.}

Assume two judges score entity $j$:
\[
s_{1j}=10.0, \qquad s_{2j}=8.0.
\]

\textit{With self-exclusion:}
\[
\text{Bias}_A(1,j) = 10.0 - 8.0 = +2.0,
\qquad
\text{Bias}_A(2,j) = 8.0 - 10.0 = -2.0.
\]

\textit{Without self-exclusion:}
\[
\text{Consensus}_{\text{naive}}(j) = 9.0,
\qquad
\text{Bias}_{\text{naive}}(1,j)=+1.0,
\quad
\text{Bias}_{\text{naive}}(2,j)=-1.0.
\]

Judge 1’s true deviation is +2.0, but the naive metric reports +1.0 —  
a 50\% underestimation caused by including $s_{1j}$ in the consensus reference.

\paragraph{General case.}
For any judge deviation $\Delta$ from peer consensus:
\[
\text{Bias}_{\text{naive}} = \frac{n-1}{n}\,\Delta.
\]

\textbf{Self-exclusion removes this circularity entirely}, ensuring unbiased deviation measurement.

\bigskip
These results confirm the two structural properties that underpin the consensus-deviation metric:
(1) bias is relative and calibrated (zero-sum), and  
(2) deviation is measured independently of the judge being evaluated (no circularity).

\end{document}